\documentclass{article}

\usepackage{bosonai_techreport}

\usepackage[utf8]{inputenc}
\usepackage[T1]{fontenc}
\usepackage{hyperref}
\usepackage{url}
\usepackage{mdwlist}
\usepackage{booktabs}
\usepackage{amsfonts}
\usepackage{amsmath}
\usepackage{amssymb}
\usepackage{amsthm}
\usepackage{mathtools}
\usepackage{nicefrac}
\usepackage{microtype}
\usepackage{xcolor}
\usepackage{enumitem}
\usepackage{graphicx}
\usepackage{subcaption}
\usepackage[capitalize,noabbrev]{cleveref}
\graphicspath{{./}}

\theoremstyle{plain}
\newtheorem{theorem}{Theorem}[section]

\newtheorem{corollary}[theorem]{Corollary}

\theoremstyle{definition}

\theoremstyle{remark}
\newtheorem{remark}[theorem]{Remark}

\newcommand{\Hs}{\mathcal{H}}
\newcommand{\Xs}{\mathcal{X}}

\newcommand{\Zs}{\mathcal{Z}}
\newcommand{\R}{\mathbb{R}}
\newcommand{\E}{\mathbb{E}}
\newcommand{\Prb}{\mathrm{Pr}}
\newcommand{\Ind}{\mathbf{1}}
\newcommand{\inner}[2]{\langle #1,\, #2 \rangle}
\newcommand{\normH}[1]{\|#1\|_{\Hs}}
\newcommand{\indep}{\perp\!\!\!\perp}
\DeclareMathOperator{\MMD}{MMD}

\DeclareMathOperator{\TPR}{TPR}
\DeclareMathOperator{\FPR}{FPR}
\DeclareMathOperator{\PPV}{PPV}

\DeclareMathOperator{\spanop}{span}

\makeatletter
\renewcommand{\paragraph}{%
  \@startsection{paragraph}{4}{\z@}%
                {0.4ex \@plus 0.2ex \@minus 0.1ex}%
                {-0.8em}%
                {\normalsize\bfseries}%
}
\makeatother

\title{The Pok\'emon Theorem and \\
  other Fairness Impossibility Results}

\author{%
  Daniel Matsui Smola \\
  Department of Computer Science \\
  University of Washington \\
  Seattle, WA \\
  \texttt{daniel@matsuismola.com} \\
  \And
  Alex Smola \\
  Boson AI \\
  Santa Clara, CA \\
  \texttt{smola@boson.ai} \\
}

\renewcommand{\bosonlogo}{boson}

\codelink{https://github.com/BaroqueObama/pokemon-code}
\reportdate{\today} 

\begin{document}

\maketitle

\begin{abstract}
Fairness impossibility results often look like distinct scalar incompatibility statements. We show that several share one RKHS geometry: fairness criteria are linear constraints on conditional mean embeddings, and unequal base rates make the law of total expectation overdetermine those constraints.

This view yields four results. The Kleinberg--Mullainathan--Raghavan dichotomy needs only group-conditional unbiasedness, not full calibration. The \emph{Pok\'emon theorem} shows that a distinct group pair satisfying any finite collection of linear mean-fairness criteria leaves a residual violation witnessed by the MMD, decaying at the Kolmogorov $m$-width rate under spectral regularity. The same tools prove an impossibility for fair feature learning: parity and class-conditional separation in representation space force class collapse under unequal base rates. The approximate relaxations yield signal and error frontiers, allowing a trade-off between real-world estimators and fairness goals. Experiments on standard fairness benchmarks are consistent with our bounds.
\end{abstract}

\section{Introduction}
\label{sec:intro}

The question of when an algorithmic decision is ``fair'' has a long and storied history. The 1960s and 1970s produced an extended debate in educational and employment testing over how to operationalize the absence of bias across demographic groups, with \citet{darlington1971another}, \citet{thorndike1971concepts}, \citet{cole1973bias}, and \citet{petersen1976evaluation} proposing competing formal criteria that were eventually shown to be mutually incompatible. \citet{hutchinson2019fifty} surveys this fifty-year debate. 

The modern incarnation rests on four impossibility results, expressed in terms of \emph{scalar} statistics of a scalar score $S$. \citet{kleinberg2017inherent} (KMR) show that calibration and class-conditional balance cannot hold simultaneously across groups with unequal base rates except via perfect prediction. \citet{chouldechova2017fair} establishes an analogous dichotomy at the binary-classifier level for predictive-value parity and error-rate balance. \citet{pleiss2017fairness} show that calibration is compatible with at most one additional error-rate constraint. \citet{barocas2023fairness} collect the pairwise incompatibilities among independence, separation, and sufficiency into the ``incompatibility triangle.'' The proofs manipulate scalar quantities (per-group base rates $p_g$, $\TPR$, $\FPR$, $\PPV$, $\E[S \mid Y, G]$) to derive a contradiction that at its core relies on expectations and the base-rate gap $p_a \ne p_b$ for groups $a$ and $b$.

The scalar framing leaves two questions unanswered. First, \emph{is the impossibility an artifact of the specific scalar criteria chosen, or does it persist for any finite checklist?} A practitioner satisfying demographic parity, equalized odds, and predictive parity is entitled to ask whether a fifth, tenth, or hundredth criterion could close the remaining gap. Second, \emph{can fair representation learning sidestep the impossibility by learning features that scrub group information?} The literature \citep{zemel2013learning,louizos2016variational,madras2018learning,edwards2016censoring} attempts exactly this, learning an encoder $\Phi : \Xs \to \Zs$ so that any downstream predictor built on $\Phi(X)$ inherits fairness for free. Partial impossibility results exist \citep{lechner2021impossibility,zhaogordon2022inherent,jovanovic2023fare}, but no unified statement in terms of downstream-invariant geometry.

We address both questions using mean embeddings in a reproducing kernel Hilbert space \citep{gretton2012kernel,sriperumbudur2010hilbert}. The RKHS view supplies a single Hilbert space $\Hs$ large enough to hold \emph{every} linear mean-fairness criterion at once, with characteristic kernels turning ``no two distributions coincide'' into a nonzero vector. This yields the following contributions:
\begin{itemize}[leftmargin=1.5em, itemsep=1pt, topsep=2pt]
  \item \textbf{Stronger KMR (\cref{sec:kmr-strong}).} KMR's conclusion holds under group-conditional unbiasedness $\E[S \mid G = g] = p_g$, a first-moment condition strictly weaker than full within-group calibration.
  \item \textbf{Pok\'emon theorem (\cref{sec:pokemon}).} For a characteristic kernel with $P_a \ne P_b$, any finite collection $\{v_1, \dots, v_m\} \subset \Hs$ of linear fairness criteria that a distinct group pair already satisfies leaves a detectable violation in the orthogonal complement, realized by the MMD witness $\delta / \normH{\delta}$, or any other witness with nonzero components in the subspace orthogonal to the tests. Under polynomial eigendecay of the pooled covariance and a source condition on $\delta = \mu_a - \mu_b$, the minimax residual over the source-condition class decays at the matched Kolmogorov $m$-width rate $\Theta(m^{-2\alpha r})$.
  \item \textbf{Impossibility of fair feature learning (\cref{sec:fair-feature}).} When base rates differ, no measurable encoder simultaneously achieves distributional parity $\MMD(\Phi(P_a), \Phi(P_b)) = 0$, class-conditional separation in $\Zs$, and nontrivial class-conditional signal. This suggests that the quest for features that are both fair and useful may be rather futile. 
  \item \textbf{Approximate fairness and a Pok\'emon--KMR bridge (\cref{sec:approx}).} An approximate version of fair feature learning bounds group-specific downstream discriminability by $(\varepsilon + \rho)/|p_a - p_b|$ under $\varepsilon$-approximate parity and $\rho$-approximate separation, specializing to the sharp error lower bound $\min(p, 1 - p)\,(1 - \mathrm{DP\_gap}/|p_a - p_b|)$ for separation-enforcing binary classifiers. A Pok\'emon--KMR bridge converts approximate class-balance control into a tail bound at finite audit resolution: under $m$ class-balance audits and the source condition, $\Prb[\,|S - Y| > t\,] = O(W R\,\lambda_{m+1}^{r}/(|\Delta p|\, t)) = O(m^{-\alpha r})$, thus recovering Stronger KMR along sequences whose class-balance residuals vanish.
\end{itemize}
The RKHS is used to host infinitely many criteria simultaneously in the Pok\'emon theorem, to quantify residual fairness through an MMD-valued witness, and to convert mean-embedding equality into distributional equality via characteristic kernels in the representation-learning result. 

\section{Background}
\label{sec:background}

\paragraph{Setup.}
An individual is described by a feature (covariates) $X \in \Xs$, a binary outcome $Y \in \{0, 1\}$, and a group attribute $G \in \{a, b\}$. A (risk) \emph{score} is a real-valued function $S = s(X)$; the key tool in our approach is the insight that for a suitably chosen RKHS $\Hs$, $S$ takes the linear form $S = \inner{w}{\phi(X)}$ for some element (a direction) $w$ in $\Hs$. A \emph{classifier} is the indicator $\hat Y = \Ind[S > t]$ for some threshold $t$. The \emph{base rate} in group $g$ is $p_g := \Prb[Y = 1 \mid G = g]$, and we write $\Delta p := p_a - p_b$. For statements involving class-conditional embeddings we assume $\Prb[G=g] > 0$ and $0 < p_g < 1$ for $g \in \{a,b\}$, so all displayed conditionals are well-defined. The KMR and fair-feature impossibilities use the regime $\Delta p \ne 0$; the Pok\'emon theorem instead only requires distributional group difference $P_a \ne P_b$.

\paragraph{Kernels and mean embeddings.}
Fix a positive-definite kernel $k : \Xs \times \Xs \to \R$ with feature map $\phi : \Xs \to \Hs$ satisfying the reproducing property $f(x) = \inner{f}{\phi(x)}$ for $f \in \Hs$, so $k(x, x') = \inner{\phi(x)}{\phi(x')}$ \citep{aronszajn1950theory,scholkopf2002learning}. For a distribution $P$ on $\Xs$ with $\E[\sqrt{k(X,X)}] < \infty$, the \emph{mean embedding} is $\mu_P := \E_{X \sim P}[\phi(X)] \in \Hs$; by reproducing, $\inner{f}{\mu_P} = \E_P[f(X)]$, so every RKHS expectation is a linear functional of $\mu_P$ \citep{gretton2012kernel}. The kernel $k$ is \emph{characteristic} if $\mu_P = \mu_Q$ implies $P = Q$; it is \emph{universal} if $\Hs$ is dense in $C(\Xs)$ on a compact $\Xs$. Universality implies characteristic \citep{steinwart2001influence}; Gaussian, Laplace, and Matérn kernels on $\R^d$ are characteristic \citep{sriperumbudur2010hilbert}. The \emph{maximum mean discrepancy} between $P$ and $Q$ is
\begin{equation}
  \MMD(P, Q) \;=\; \sup_{\normH{f} \le 1}\bigl(\E_P[f(X)] - \E_Q[f(X)]\bigr) \;=\; \normH{\mu_P - \mu_Q},
  \label{eq:mmd}
\end{equation}
with witness $f^\star = (\mu_P - \mu_Q)/\normH{\mu_P - \mu_Q}$ \citep{gretton2012kernel}. For a characteristic kernel, $\MMD(P, Q) = 0$ iff $P = Q$, so any two distinct distributions are separated by some RKHS function.

Throughout, we use the conditional mean embeddings and group difference vectors
\begin{align}
\mu_g & := \E[\phi(X) \mid G = g] && \text{ and } &
\mu_{y, g} & := \E[\phi(X) \mid Y = y, G = g] \\
\delta & := \mu_a - \mu_b && \text{ and } & 
\delta_y & := \mu_{y, a} - \mu_{y, b} 
\end{align}
By construction, $\MMD(P_a, P_b) = \normH{\delta}$, and for any $w$, $\E[S \mid G = a] - \E[S \mid G = b] = \inner{w}{\delta}$, with the analogous identity for $\delta_y$ in place of $\delta$.

\begin{theorem}[Classical KMR, \citep{kleinberg2017inherent}]\label{thm:kmr-classical}
Assume $\Prb[G=g] > 0$ and $0 < p_g < 1$ for $g \in \{a,b\}$. Let $S \in [0, 1]$ satisfy
\begin{description*}
\item[(i) Within-group calibration] $\E[Y \mid S = s, G = g] = s$ for all $s, g$;
\item[(ii) Positive-class balance] $\E[S \mid Y = 1, G = a] = \E[S \mid Y = 1, G = b]$;
\item[(iii) Negative-class balance] $\E[S \mid Y = 0, G = a] = \E[S \mid Y = 0, G = b]$.
\end{description*}
Then either $p_a = p_b$ or $S = Y$ almost surely.
\end{theorem}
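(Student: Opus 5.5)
The argument only uses the first two moments of $S$ per group and class, and calibration enters in exactly two places. To fix notation, write $x_g := \E[S \mid Y=0, G=g]$ and $y_g := \E[S \mid Y=1, G=g]$. Conditions (ii) and (iii) say $y_a = y_b =: y$ and $x_a = x_b =: x$.

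\textbf{First use of calibration: unbiasedness.} Condition (i) together with the tower property gives $\E[S \mid G=g] = \E[\E[Y \mid S, G=g] \mid G=g] = p_g$. Expanding the same expectation by the law of total expectation over $Y$ gives
\begin{equation*}
p_g = (1-p_g)\,x + p_g\, y \qquad\text{for } g \in \{a,b\}.
\end{equation*}
Subtracting the two identities yields $(p_a - p_b)(1 - y + x) = 0$. So either $p_a = p_b$, which is the first alternative, or $y - x = 1$.

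\textbf{Second step: $y - x = 1$ forces perfect prediction.} Because $S \in [0,1]$, we have $x \ge 0$ and $y \le 1$. Hence $y - x = 1$ holds only if $x = 0$ and $y = 1$. This means $\E[S \mid Y=0, G=g] = 0$ with $S \ge 0$, so $S = 0$ almost surely on $\{Y=0\}$. Likewise $\E[S \mid Y=1, G=g] = 1$ with $S \le 1$, so $S = 1$ almost surely on $\{Y=1\}$. The positivity assumptions $\Prb[G=g]>0$ and $0<p_g<1$ make every conditional well-defined, and together these give $S = Y$ almost surely.

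\textbf{Where the difficulty lies.} The only delicate point is the first use of (i): we need only its consequence $\E[S \mid G=g] = p_g$, not full calibration. This already suggests the paper's strengthening to group-conditional unbiasedness. The remaining steps are routine, with one small subtlety in the second step: the boundedness $S \in [0,1]$ is essential to turn equality of means into almost-sure equality.
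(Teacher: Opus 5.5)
Your proposal is correct and follows essentially the same route as the paper, which gets the classical statement by noting that calibration implies $\E[S \mid G=g] = p_g$ via the tower property and then running the Stronger KMR argument: total expectation over $Y$, subtraction across groups to get $(p_a-p_b)\bigl(1-(\mu^+-\mu^-)\bigr)=0$, boundedness forcing $\mu^+=1$ and $\mu^-=0$, and a nonnegativity (Markov) step to reach $S=Y$ almost surely. The only blemish is in the exposition: calibration enters once, not ``in exactly two places,'' and the argument uses only first moments, not ``the first two moments,'' so your opening sentence overstates what the proof needs.
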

Conditions (ii) and (iii) are the class-conditional balance form of \emph{separation}; condition (i) is the pointwise calibration form of \emph{sufficiency}. 

\paragraph{Related work.}
Beyond the impossibility and fair-representation results discussed above, a complementary possibility literature \citep{friedler2016possibility,bell2023possibility} studies approximate relaxations. RKHS tools have been used to \emph{enforce} fairness rather than to derive impossibilities: \citet{perezsuay2017fair,perezsuay2023fairkernel} use HSIC-based regularization, \citet{donini2018empirical} formulate group fairness as orthogonality in an RKHS, and \citet{wei2023mean} study mean-parity fair regression. Connections to multi-accuracy and multi-calibration, which similarly organize fairness as a family of scalar moment conditions, appear in \citet{hebertjohnson2018multicalibration,kim2019multiaccuracy}. Our contribution sits on the impossibility side: we don't propose new algorithms and we don't claim the RKHS formulation \emph{enforces} anything. Instead, it is used to reason over the space of all linear functions of the underlying distributions, specifically in \cref{sec:pokemon,sec:fair-feature}.

\section{A Stronger KMR Theorem}
\label{sec:kmr-strong}

Let's begin by strengthening the classical KMR theorem (\cref{thm:kmr-classical}) slightly, using only the scalar first moments of $S$ within each group. Later on we will prove a quantitative variant (Theorem~\ref{thm:bridge-kmr}).

\begin{theorem}[Stronger KMR]\label{thm:kmr-strong}
Assume $\Prb[G=g] > 0$ and $0 < p_g < 1$ for $g \in \{a,b\}$. Let $S \in [0, 1]$ satisfy the conditions (ii) and (iii) of Theorem~\ref{thm:kmr-classical} and 
\begin{description*}
\item[(i) Group-conditional unbiasedness] $\E[S \mid G = g] = p_g$ for each $g \in \{a, b\}$;
\end{description*}
Then either $p_a = p_b$ or $S = Y$ almost surely.
\end{theorem}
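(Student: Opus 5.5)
Define the class-conditional means $x := \E[S \mid Y = 0, G = g]$ and $y := \E[S \mid Y = 1, G = g]$. By conditions (ii) and (iii) these are the same for $g = a$ and $g = b$, which is why no group index is needed. The plan is to use the law of total expectation within each group to turn condition (i) into one scalar linear equation per group. We then solve this pair of equations for $x, y$. Finally we use $S \in [0,1]$ to upgrade the first-moment conclusion to an almost-sure one.

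\textbf{Step 1 (overdetermined linear system).} By the law of total expectation,
\begin{equation*}
  p_g \;=\; \E[S \mid G = g] \;=\; (1 - p_g)\,x + p_g\,y, \qquad g \in \{a, b\}.
\end{equation*}
Each equation rearranges to $(1 - p_g)\,x = p_g\,(1 - y)$. If $p_a \ne p_b$, we subtract the two equations written as $p_g = x + p_g (y - x)$. This gives $(p_a - p_b)(1 - y + x) = 0$, so $y = 1 + x$. Combined with $x, y \in [0,1]$ (since $S \in [0,1]$), this forces $x = 0$ and $y = 1$.

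\textbf{Step 2 (from means to almost-sure equality).} Now $\E[S \mid Y = 0, G = g] = 0$ with $S \ge 0$, so $S = 0$ almost surely on $\{Y = 0, G = g\}$. Similarly, $\E[1 - S \mid Y = 1, G = g] = 0$ with $1 - S \ge 0$, so $S = 1$ almost surely on $\{Y = 1, G = g\}$. These events have positive probability by the assumptions $\Prb[G = g] > 0$ and $0 < p_g < 1$, so all the conditionals are well-defined. Since the four events cover the sample space, $S = Y$ almost surely.

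The only delicate point is that Step 1 uses exactly the scalar moments that the classical proof extracts from calibration. We must check that within-group calibration (i) in \cref{thm:kmr-classical} was used there only through $\E[S \mid G = g] = \E[\E[Y \mid S, G] \mid G = g] = p_g$. This confirms that the new hypothesis is strictly weaker and still sufficient; beyond that the argument is elementary. The boundedness $S \in [0,1]$ is essential in both steps. In Step 1 it excludes solutions of $y = 1 + x$ other than $(0,1)$, and in Step 2 it turns vanishing means into almost-sure statements.
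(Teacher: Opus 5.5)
Your proposal is correct and follows the paper's proof essentially step for step. The law of total expectation turns unbiasedness into $p_g = p_g\mu^+ + (1-p_g)\mu^-$, and subtracting across groups gives $(p_a-p_b)\bigl(1-(\mu^+-\mu^-)\bigr)=0$. The range $S\in[0,1]$ then forces $\mu^+=1$ and $\mu^-=0$, and the zero-mean-of-a-nonnegative-variable (Markov) argument upgrades this to $S=Y$ almost surely. Your closing remark about how calibration enters the classical proof is not needed for the argument.
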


Condition (i) replaces calibration $\E[Y \mid S = s, G = g] = s$ with the single-scalar-per-group identity $\E[S \mid G = g] = p_g$. For a calibrated score $S$, we obtain by the law of iterated expectations that
\[
  \E[S \mid G = g] \;=\; \E\bigl[\E[Y \mid S, G = g] \,\bigm|\, G = g\bigr] \;=\; \E[Y \mid G = g] \;=\; p_g.
\]
The converse fails: if $S^\star$ is calibrated and $\eta : \Xs \to \R$ is any function with $\E[\eta(X) \mid G = g] = 0$ for each $g$ and $S^\star + \eta \in [0, 1]$ a.s., then $S := S^\star + \eta$ satisfies (i) but is not necessarily calibrated. Unbiasedness is a first-moment property of $S$; calibration is a distributional property of $Y \mid S, G$. The gap between them is exactly the space of miscalibration patterns whose within-group average vanishes.

\paragraph{Proof idea.}
Write $\mu^+ := \E[S \mid Y = 1, G = g]$ and $\mu^- := \E[S \mid Y = 0, G = g]$; by (ii), (iii) these are group-invariant. The tower property applied to (i) yields, for $g \in \{a, b\}$,
\begin{equation}
  p_g \;=\; p_g\, \mu^+ + (1 - p_g)\, \mu^-.
  \label{eq:kmr-strong-tle}
\end{equation}
Subtracting the two groups gives $(p_a - p_b)\bigl(1 - (\mu^+ - \mu^-)\bigr) = 0$, so either $p_a = p_b$ or $\mu^+ - \mu^- = 1$. In the latter case $S \in [0, 1]$ forces $\mu^+ = 1$, $\mu^- = 0$, and a Markov argument gives $S = Y$ almost surely. The full proof is in \cref{app:kmr-proof}.


\paragraph{Implications.}
Miscalibrated scores do not escape KMR. A bank using a logistic score whose within-bucket accuracy varies wildly at the level of individual $s$ but whose average over each group tracks the base rate still falls under the impossibility result. The relaxation also clarifies the structural boundary of this type of argument: an analogous weakening \emph{fails} for the independence--sufficiency pairing of the triangle \citep{barocas2023fairness}. That pairing requires the full distributional statement $Y \indep G \mid S$, not merely its first moment, because the tower step $\E[T \mid G] = p_g$ for the recalibrated score $T := \E[Y \mid S]$ relies on sufficiency in distribution: mean independence alone does not propagate through the nonlinear $f(s) = \E[Y \mid S = s]$. The KMR side is special precisely because its requirements are already in first-moment form, whereas sufficiency is not. 

\section{The Pok\'emon Theorem}
\label{sec:pokemon}

We now turn to the first question raised in the introduction: \emph{can a finite checklist of scalar mean criteria certify that two group distributions coincide?} In the RKHS view, the answer is negative for any fixed finite checklist once a distinct group pair passes that checklist. 

\begin{theorem}[Pok\'emon theorem, qualitative]\label{thm:pokemon}
Let $k$ be a characteristic kernel on $\Xs$ with RKHS $\Hs$, and suppose $P_a \ne P_b$, so that $\delta := \mu_a - \mu_b \ne 0$. Let $\{v_1, \ldots, v_m\} \subset \Hs$ be a finite collection of linear mean-fairness criteria satisfied in the sense that
\[
  \E[c_i(x)|g=a] - \E[c_i(x)|g=b] =\inner{v_i}{\delta} \;=\; 0 \qquad \text{for all } i = 1, \ldots, m.
\]
Then there exists $v_{m+1} \in \Hs$ orthogonal to $\spanop\{v_1, \ldots, v_m\}$ with $\inner{v_{m+1}}{\delta} > 0$. An explicit choice is the MMD witness
\[
  v_{m+1} = \delta / {\normH{\delta}}, \text{ for which } \inner{v_{m+1}}{\delta} = \normH{\delta} = \MMD(P_a, P_b).
\]
\end{theorem}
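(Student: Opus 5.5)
The statement is essentially a piece of Hilbert-space linear algebra, so the plan is to verify the explicit witness directly.

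\textbf{Step 1: $\delta$ is nonzero.} Since $k$ is characteristic and $P_a \ne P_b$, the definition of characteristic gives $\mu_a \ne \mu_b$. Hence $\delta \ne 0$, and $\MMD(P_a,P_b)=\normH{\delta}>0$ by \eqref{eq:mmd}. For the mean embeddings to exist, we implicitly use the standing integrability assumption $\E[\sqrt{k(X,X)}]<\infty$ from the background section; I would state this explicitly.

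\textbf{Step 2: translate the hypothesis into orthogonality.} The hypothesis says $\inner{v_i}{\delta}=0$ for every $i$. By linearity and continuity of the inner product, this gives $\inner{v}{\delta}=0$ for every $v\in V:=\spanop\{v_1,\dots,v_m\}$. In other words, $\delta\in V^\perp$. Note that $V$ is finite-dimensional and therefore closed, so the decomposition $\Hs=V\oplus V^\perp$ is available. We do not actually need it, since $\delta$ already lies in $V^\perp$.

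\textbf{Step 3: check the explicit witness.} Set $v_{m+1}:=\delta/\normH{\delta}$, which is well defined by Step 1.
\begin{itemize}
\item \emph{Orthogonality.} For each $v\in V$ we have $\inner{v}{v_{m+1}}=\inner{v}{\delta}/\normH{\delta}=0$, so $v_{m+1}\perp V$.
\item \emph{Positivity.} $\inner{v_{m+1}}{\delta}=\normH{\delta}^2/\normH{\delta}=\normH{\delta}=\MMD(P_a,P_b)>0$.
\end{itemize}

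\textbf{Step 4: fairness interpretation.} By the reproducing identity $\inner{f}{\mu_g}=\E[f(X)\mid G=g]$, the function $c_{m+1}:=v_{m+1}$ has group-mean gap $\E[c_{m+1}(X)\mid G=a]-\E[c_{m+1}(X)\mid G=b]=\MMD(P_a,P_b)>0$. So it is a new linear mean-fairness criterion that the groups violate.

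\textbf{Step 5 (optional remark): other witnesses.} Any $u$ whose projection onto $V^\perp$ has positive inner product with $\delta$ also works after projecting. Since $\delta\in V^\perp$, we have $\inner{u}{\delta}=\inner{P_{V^\perp}u}{\delta}$.

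\textbf{Main obstacle.} There is no substantive obstacle. The only point needing care is well-definedness: the existence of $\mu_g$ as Bochner integrals, so that $\inner{f}{\mu_g}=\E[f\mid G=g]$ holds. This follows from the integrability assumption together with continuity of $f\mapsto\inner{f}{\cdot}$.
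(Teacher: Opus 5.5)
Your proposal is correct and matches the paper's proof: the hypotheses place $\delta$ in $V^\perp$, and the characteristic kernel gives $\delta \ne 0$. The normalized vector $\delta/\normH{\delta}$ is then the orthogonal witness with $\inner{\delta/\normH{\delta}}{\delta} = \normH{\delta} > 0$; your extra remarks on Bochner well-definedness and on projecting other witnesses onto $V^\perp$ are sound but not needed.
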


\begin{proof}
Write $V := \spanop\{v_1, \ldots, v_m\}$. The hypotheses place $\delta \in V^\perp$. Since $k$ is characteristic and $P_a \ne P_b$, $\delta \ne 0$; hence $\delta / \normH{\delta} \in V^\perp$ is a unit vector with $\inner{\delta / \normH{\delta}}{\delta} = \normH{\delta} > 0$.
\end{proof}

As a piece of linear algebra, the content is elementary: if a nonzero group-difference vector lies in the orthogonal complement of a finite audit subspace, then the normalized vector itself is an unaudited violating direction. The fairness content comes from the characteristic-kernel hypothesis, which ties $\delta \ne 0$ to distributional distinctness of the groups. Thus, a finite list of linear mean-equality criteria is not, by itself, a certificate of $P_a = P_b$: for any distinct pair that passes the list, the MMD witness remains outside the audited span. Any arbitrary fairness test that has a nonzero projection on the orthogonal subspace $V^\perp$ will be able to detect a fairness violation ($\delta$ is simply a constructive proof). 

The theorem's name alludes to the franchise's slogan \emph{Gotta Catch 'Em All}: the theorem says you can't. A fully distribution-free non-certification statement would additionally require a realizability lemma showing that distinct distributions can be constructed to pass an arbitrary finite list; \cref{thm:pokemon} is the conditional witness statement used in the rest of the paper. In Section~\ref{sec:approx}, we will show that for suitable choices of an RKHS and dimensionality we obtain finite-dimensional variants of KMR. 

\paragraph{Quantitative version.}
\Cref{thm:pokemon} is silent on \emph{how large} the residual violation remaining after $m$ criteria actually is. Its magnitude is governed by the spectral regularity of $\delta$ relative to the pooled data geometry. Let $Q := \tfrac{1}{2}(P_a + P_b)$ with pooled mean embedding $\mu := \E_{X \sim Q}[\phi(X)]$, and define the pooled covariance operator
\[
  \Sigma \;:=\; \E_{X \sim Q}\bigl[(\phi(X) - \mu) \otimes (\phi(X) - \mu)\bigr] \;:\; \Hs \to \Hs.
\]
Under a bounded kernel $\sup_x k(x, x) \le \kappa^2$, the operator $\Sigma$ is trace-class and admits a Mercer decomposition $\Sigma = \sum_j \lambda_j\, e_j \otimes e_j$ with $\lambda_1 \ge \lambda_2 \ge \cdots \ge 0$ and $\{e_j\}$ orthonormal in $\Hs$ \citep{reedsimon1980methods}. We impose two standard regularity assumptions \citep{caponnetto2007optimal,fischersteinwart2020sobolev}: 

\begin{description*}
\item[(A1) Polynomial eigendecay.] $\alpha > 1$ and $c_1, c_2 > 0$ with $c_1 j^{-\alpha} \le \lambda_j \le c_2 j^{-\alpha}$ for all $j \ge 1$.
\item[(A2) Integral source condition.] $\delta = \Sigma^r u$ for some $u \in \Hs$ with $\normH{u} \le R$, for a fixed $r > 0$.
\end{description*}

Write $B_r(R) := \{\Sigma^r u : \normH{u} \le R\}$. In the Mercer basis, $B_r(R)$ is a Hilbert ellipsoid with semi-axes $a_j = R \lambda_j^r$, obtained after applying the operator $\Sigma^r$ to a ball of radius $R$.

\begin{theorem}[Pok\'emon theorem, quantitative]\label{thm:pokemon-quant}
Under \textbf{(A1)} and \textbf{(A2)}, for every $m \ge 1$,
\begin{equation}
  \inf_{\substack{V \subset \Hs \\ \dim V \le m}}\; \sup_{\delta \in B_r(R)}\; \normH{P_{V^\perp} \delta}^2
  \;=\; R^2 \lambda_{m+1}^{2r}
  \;=\; \Theta\!\bigl(R^2 (m+1)^{-2\alpha r}\bigr),
  \label{eq:pokemon-quant}
\end{equation}
attained at the top-$m$ Mercer eigenspace $V_m := \spanop\{e_1, \ldots, e_m\}$.
\end{theorem}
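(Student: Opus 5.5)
This is the classical Kolmogorov $m$-width computation for a Hilbert ellipsoid, and the plan is to prove matching upper and lower bounds.

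\textbf{Reduction to the unit ball.} For any closed subspace $V$, the map $\delta \mapsto P_{V^\perp}\delta$ restricted to $B_r(R)$ equals $u \mapsto P_{V^\perp}\Sigma^r u$ over $\normH{u}\le R$. Hence
\[
\sup_{\delta \in B_r(R)} \normH{P_{V^\perp}\delta}^2 = R^2 \,\|P_{V^\perp}\Sigma^r\|_{\mathrm{op}}^2 .
\]
The problem therefore becomes computing $\inf_{\dim V\le m}\|P_{V^\perp}\Sigma^r\|_{\mathrm{op}}$. This is the best rank-$m$ approximation error of the compact self-adjoint operator $\Sigma^r$, whose singular values are $\lambda_j^r$. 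The operator $\Sigma$ is trace-class, hence compact, so $\Sigma^r$ is well defined by the spectral calculus.

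\textbf{Upper bound.} Take $V=V_m$. In the Mercer basis, write $\delta=\sum_j \lambda_j^r u_j e_j$ with $\sum_j u_j^2\le R^2$; components of $u$ lying in $\ker\Sigma$ contribute nothing. Then
\[
\normH{P_{V_m^\perp}\delta}^2=\sum_{j>m}\lambda_j^{2r}u_j^2\le \lambda_{m+1}^{2r}R^2,
\]
using monotonicity of the eigenvalues. Equality holds at $u=Re_{m+1}$.

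\textbf{Lower bound (main step).} Fix any $V$ with $\dim V\le m$ and set $W:=\spanop\{e_1,\dots,e_{m+1}\}$. Since $\dim W=m+1>\dim V$, there is a unit vector $w\in W$ with $w\perp V$. I would argue this either by a dimension count (the map $P_V|_W$ has a nontrivial kernel) or equivalently by Courant--Fischer. Put $u=Rw$. Then $\delta=\Sigma^r u\in B_r(R)$ and $\delta\in W$, so
\[
\normH{P_{V^\perp}\delta}\ge \frac{\inner{\delta}{w}}{\normH{w}} = R\sum_{j\le m+1}\lambda_j^r w_j^2 \ge R\lambda_{m+1}^r .
\]
Here the first inequality holds because $w\in V^\perp$, and the last because $\sum_{j\le m+1}w_j^2=1$ and $\lambda_j\ge\lambda_{m+1}$ for $j\le m+1$.

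Combining the two bounds gives the exact value $R^2\lambda_{m+1}^{2r}$, and shows the infimum is attained at $V_m$.

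\textbf{Rate.} By (A1), $c_1^{2r}(m+1)^{-2\alpha r}\le\lambda_{m+1}^{2r}\le c_2^{2r}(m+1)^{-2\alpha r}$, which is the stated $\Theta$ bound.

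\textbf{Technical caveats.} The main point to handle carefully is the lower bound's use of the $e_j$. It requires $\lambda_{m+1}>0$, which (A1) guarantees, so $e_1,\dots,e_{m+1}$ are genuine orthonormal eigenvectors. One must also make sure the $e_j$ are indexed so that the Mercer sum covers all nonzero eigenvalues, possibly with repeats. If $\Hs$ has a kernel of $\Sigma$ beyond the span of the $e_j$, it plays no role, since $\Sigma^r$ annihilates it.
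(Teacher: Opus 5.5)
Your proof is correct and follows the paper's route: the upper bound is source-condition saturation in the Mercer basis at $V_m$, and the lower bound is a dimension count producing a unit vector in $V^\perp \cap \spanop\{e_1,\dots,e_{m+1}\}$. The only cosmetic difference is in the lower bound: the paper takes $\delta^\star = R\lambda_{m+1}^r u^\star$, which lies in $V^\perp$, and then checks coordinatewise that it belongs to $B_r(R)$, whereas you take $\delta = \Sigma^r(Rw)$, which is in $B_r(R)$ by construction, and bound $\normH{P_{V^\perp}\delta}$ from below by $\inner{\delta}{w} \ge R\lambda_{m+1}^r$.
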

The square root of the quantity on the left of~\eqref{eq:pokemon-quant} is the Kolmogorov $m$-width of the Hilbert ellipsoid $B_r(R)$, a classical approximation-theoretic object \citep{pinkus1985widths}. The upper bound comes from source-condition saturation in the Mercer basis; the matching lower bound is pigeonhole on the top-$(m+1)$ block. The full proof is in \cref{app:pokemon-quant}. Note that this is a \emph{worst case} guarantee:   the discrepancy can be as large as $\Theta\!\bigl(R^2 (m+1)^{-2\alpha r}\bigr)$ as the proof relies on the geometry of the ellipsoid.

\paragraph{Implications.}
The exponent $\alpha r$ is the product of two geometric factors: $\alpha$ measures how fast the pooled covariance spectrum decays (a property of the data-generating process), and $r$ measures how well the source-condition class aligns with the leading covariance directions. Source elements spread over many spectral directions give a slow $m$-width decay: criteria barely chip away at the worst-case violation. Elements concentrated on the leading eigenspace can have much smaller pointwise residual, but the uniform residual over $B_r(R)$ remains positive at every finite $m$. The minimax-optimal allocation of a size-$m$ fairness budget over the source class is the top-$m$ Mercer eigenspace of $\Sigma$.

\section{The Impossibility of Fair Feature Learning}
\label{sec:fair-feature}

We now turn to the second question of the introduction: \emph{can fair representation learning escape the scalar impossibility?} The standard goal \citep{zemel2013learning,louizos2016variational,madras2018learning,edwards2016censoring} is to learn an encoder $\Phi : \Xs \to \Zs$ such that the representation $\Phi(X)$ satisfies the following two criteria: 
\begin{description*}
\item[(i)] $\Phi(X)$ carries predictive signal about $Y$ 
\item[(ii)] $\Phi(X)$ is distributionally insensitive to the group attribute $G$.
\end{description*}
Once the representation is fair, so the hope goes, any downstream predictor built on $\Phi(X)$ inherits fairness for free. This has given rise to plenty of fair feature learning algorithms. We show that when base rates differ, predictive signal \emph{cannot} coexist with distributional parity and the natural strengthening to class-conditional separation in representation space.

\paragraph{Setup.}
Fix a measurable encoder $\Phi : \Xs \to \Zs$ and a positive-definite kernel $k_\Zs$ on $\Zs$ with measurable feature map $\phi_\Zs : \Zs \to \Hs_\Zs$. The representation-space conditional mean embeddings are
\[
  \mu_{\Phi, g} := \E\bigl[\phi_\Zs(\Phi(X)) \mid G = g\bigr]
  \text{ and } 
  \mu_{\Phi, y, g} := \E\bigl[\phi_\Zs(\Phi(X)) \mid Y = y, G = g\bigr].
\]
Distributional parity in $\Zs$ corresponds to $\mu_{\Phi, a} = \mu_{\Phi, b}$. Equivalently, under a characteristic $k_\Zs$, it corresponds to $\Phi(P_a) = \Phi(P_b)$ as distributions on $\Zs$. Class-conditional separation in $\Zs$ corresponds to $\mu_{\Phi, y, a} = \mu_{\Phi, y, b}$ for each $y$; under this condition we write $\mu_{\Phi, y} := \mu_{\Phi, y, a} = \mu_{\Phi, y, b}$ for the group-invariant class embedding.

\begin{theorem}[Impossibility of fair feature learning]\label{thm:fair-feature}
Let $k_\Zs$ be a characteristic kernel on $\Zs$ with measurable feature map, and let $\Phi : \Xs \to \Zs$ be a measurable encoder. Assume the displayed Bochner expectations exist, $\Prb[G=g] > 0$, $0 < p_g < 1$ for $g \in \{a,b\}$, $p_a \ne p_b$, and
\begin{description*}
\item[(a) Distributional parity in $\Zs$] $\mu_{\Phi, a} = \mu_{\Phi, b}$;
\item[(b) Class-conditional separation in $\Zs$] $\mu_{\Phi, y, a} = \mu_{\Phi, y, b}$ for $y \in \{0, 1\}$.
\end{description*}
Then $\mu_{\Phi, 0} = \mu_{\Phi, 1}$ in $\Hs_\Zs$. Under characteristic $k_\Zs$ this lifts to $\Phi(X) \indep Y$ as distributions on $\Zs$, so any measurable downstream predictor $f : \Zs \to \R$ with $\E[|f(\Phi(X))|] < \infty$ satisfies
\[
  \E\bigl[f(\Phi(X)) \mid Y = 1\bigr] \;=\; \E\bigl[f(\Phi(X)) \mid Y = 0\bigr].
\]
\end{theorem}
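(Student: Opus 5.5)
The plan mirrors the proof idea of \cref{thm:kmr-strong}, lifted from scalars to vectors in $\Hs_\Zs$. The key tool is the law of total expectation for Bochner integrals. For each group $g$, conditioning on $Y$ gives
\[
  \mu_{\Phi, g} \;=\; p_g\, \mu_{\Phi, 1, g} + (1 - p_g)\, \mu_{\Phi, 0, g}.
\]
This is justified because the Bochner expectation commutes with the finite mixture decomposition of $P_g$ into its class-conditional components. The conditionals are well defined since $\Prb[G=g]>0$ and $0<p_g<1$.

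Next I substitute hypothesis (b), writing $\mu_{\Phi,y}$ for the common class embedding, so that $\mu_{\Phi,g} = \mu_{\Phi,0} + p_g(\mu_{\Phi,1}-\mu_{\Phi,0})$. Subtracting the $a$ and $b$ identities and using (a) yields
\[
  0 \;=\; \mu_{\Phi,a}-\mu_{\Phi,b} \;=\; (p_a - p_b)\,(\mu_{\Phi,1} - \mu_{\Phi,0}).
\]
Since $p_a \ne p_b$, dividing by the nonzero scalar gives $\mu_{\Phi,1} = \mu_{\Phi,0}$ in $\Hs_\Zs$. This is the vector analogue of \eqref{eq:kmr-strong-tle}. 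The difference from the scalar case is that there is no boundedness argument forcing perfect prediction: the only escape, "$\mu^+ - \mu^- = 1$", has no vector counterpart, so the conclusion is pure collapse.

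The lifting step needs a little care and is where I expect the only real subtlety. First, each class embedding is a mixture over groups, $\E[\phi_\Zs(\Phi(X)) \mid Y=y] = \sum_g \Prb[G=g\mid Y=y]\,\mu_{\Phi,y,g}$. Under (b) this mixture equals $\mu_{\Phi,y}$ regardless of the mixing weights. Hence the unconditional-on-$G$ class embeddings of $\Phi(X)\mid Y=y$ coincide for $y=0,1$. Because $k_\Zs$ is characteristic, the laws of $\Phi(X)\mid Y=1$ and $\Phi(X)\mid Y=0$ are then equal. Equality of the two class-conditional laws, with both classes having positive probability, is exactly $\Phi(X)\indep Y$.

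Finally, for any measurable $f$ with $\E|f(\Phi(X))|<\infty$, the conditional expectations $\E[f(\Phi(X))\mid Y=y]$ are finite and are integrals of $f$ against the same pushforward law. They therefore agree. Note that $f$ need not lie in $\Hs_\Zs$: the characteristic property is what converts equality of embeddings into equality of measures, and hence into equality against every integrable test function.
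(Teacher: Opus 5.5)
Your proposal is correct and follows the paper's proof: the same total-expectation identity under (b), the same subtraction using (a) and $p_a \ne p_b$, and the same characteristic-kernel lift to $\Phi(X) \indep Y$. The only cosmetic difference is that you pool the class embeddings over $G$ at the embedding level and invoke injectivity once. The paper instead first lifts (b) to the distributional equality $Q_{y,a} = Q_{y,b}$ and then applies injectivity a second time to get $Q_0 = Q_1$.
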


\paragraph{Proof sketch.}
The law of total expectation applied to $\phi_\Zs \circ \Phi$ conditional on $G = g$, together with class-conditional separation (b), yields
\begin{equation}
  \mu_{\Phi, g} \;=\; p_g\, \mu_{\Phi, 1} + (1 - p_g)\, \mu_{\Phi, 0}.
  \label{eq:fair-feature-tle}
\end{equation}
Subtracting across groups, $\mu_{\Phi, a} - \mu_{\Phi, b} = (p_a - p_b)(\mu_{\Phi, 1} - \mu_{\Phi, 0})$. Condition (a) zeros the left-hand side, and $p_a \ne p_b$ then forces $\mu_{\Phi, 1} = \mu_{\Phi, 0}$. The characteristic-kernel hypothesis promotes this mean-embedding equality to equality of the class-conditional distributions, whence $\Phi(X) \indep Y$. A full proof can be found in \cref{app:fair-feature-proof}.

\paragraph{What the theorem does and does not say.}
\Cref{thm:fair-feature} is a \emph{joint-condition} impossibility. It does not forbid distributional parity (a) and class-conditional separation (b) by themselves: a constant encoder satisfies both. It forbids the conjunction of (a), (b), and nontrivial class signal under $\Delta p \ne 0$. Distributional parity (a) is achievable at the cost of accuracy, and class-conditional separation (b) is achievable without parity. Both are standard in the fair-representation literature. The tension is in asking for both at once while retaining class information against a base-rate gap. Related partial results include \citet{lechner2021impossibility} (no single representation is fair across arbitrary downstream tasks), \citet{zhaogordon2022inherent} (information-theoretic lower bound on sum-of-group errors under statistical parity), and \citet{jovanovic2023fare} (practical certificates that relax (b)). \Cref{thm:fair-feature} pinpoints the exact conjunction: parity plus class-separation in $\Zs$ forces class collapse.

\paragraph{Implications.}
Any encoder that enforces both (a) and (b) in a base-rate-gap regime must collapse the representation's class structure. The collapse is total: $\Phi(X) \indep Y$, so \emph{no} measurable downstream predictor on the representation retains any class-conditional signal, be it linear, kernelized, or neural. This is stronger than the Pok\'emon theorem's ``some criterion remains unsatisfied'' statement: here, the obstruction is not a residual but a full collapse of the joint distribution of $(\Phi(X), Y)$. The practical question is therefore how closely an encoder can approximate both (a) and (b) without losing class signal. The next section quantifies that trade-off.

\section{Approximate Fairness}
\label{sec:approx}

\Cref{thm:pokemon} and \cref{thm:fair-feature} are exact statements: they describe the boundary $\inner{v_i}{\delta} = 0$ or $\mu_{\Phi, y, a} = \mu_{\Phi, y, b}$. In practice, we're willing to trade off some amount of constraint violation in favor of much improved (or even possible) classifier performance. The results below translate each theorem into a quantitative approximate form and deduce a Pareto frontier for binary classifiers.

\paragraph{Approximate Pok\'emon.}
Suppose an orthonormal family $\{v_1, \ldots, v_m\}$ satisfies its criteria only approximately, $|\inner{v_i}{\delta}| \le \varepsilon_i$. Writing $V := \spanop\{v_1, \ldots, v_m\}$, Parseval gives $\normH{P_V \delta}^2 = \sum_{i=1}^m |\inner{v_i}{\delta}|^2 \le \sum_i \varepsilon_i^2$, and Pythagoras yields
\begin{equation}
  \normH{P_{V^\perp} \delta}^2
  \;=\; \normH{\delta}^2 - \normH{P_V \delta}^2
  \;\ge\; \MMD(P_a, P_b)^2 - \sum_{i = 1}^m \varepsilon_i^2.
  \label{eq:approx-pokemon}
\end{equation}
The residual direction $P_{V^\perp} \delta / \normH{P_{V^\perp} \delta}$ witnesses this residual. If every $\varepsilon_i \le \varepsilon$, then $\normH{P_{V^\perp} \delta} \ge \sqrt{\MMD(P_a, P_b)^2 - m\varepsilon^2} \ge \MMD(P_a, P_b) - \sqrt{m}\,\varepsilon$. Approximate satisfaction of criteria cannot close the gap beyond a $\sqrt{m}\,\varepsilon$ envelope; combined with \cref{thm:pokemon-quant}, under the source-condition class $B_r(R)$, the worst-case residual still decays at the $m$-width rate until dominated by $m\varepsilon^2$.

\paragraph{Approximate fair feature learning.}
Relax hypotheses (a) and (b) of \cref{thm:fair-feature} to their approximate versions, i.e.\ the feature maps make the representation only \emph{approximately} fair. Then
\[
  \bigl\|\mu_{\Phi, a} - \mu_{\Phi, b}\bigr\|_{\Hs_\Zs} \leq \varepsilon
  \text{ and } 
  \bigl\|\mu_{\Phi, y, a} - \mu_{\Phi, y, b}\bigr\|_{\Hs_\Zs} \leq \rho 
  \text{ for } y \in \{0, 1\}.
\]
Write $\delta_y^\Phi := \mu_{\Phi, y, a} - \mu_{\Phi, y, b}$ (class-conditional group difference in $\Hs_\Zs$). Applying the law of total expectation to $\phi_\Zs \circ \Phi$ in each group and taking the difference yields the identity
\begin{equation}
  \mu_{\Phi, a} - \mu_{\Phi, b}
  \;=\; p_a\, \delta_1^\Phi + (1 - p_a)\, \delta_0^\Phi + (p_a - p_b)\,(\mu_{\Phi, 1, b} - \mu_{\Phi, 0, b}),
  \label{eq:approx-master-Z}
\end{equation}
derived in Appendix~\ref{app:approx-identity}. Taking the inner product with a unit $w \in \Hs_\Zs$, the first two terms on the right are at most $\rho$ in absolute value and the left-hand side at most $\varepsilon$, so
\[
  |p_a - p_b| \cdot \bigl|\inner{w}{\mu_{\Phi, 1, b} - \mu_{\Phi, 0, b}}\bigr|
  \;\le\; \varepsilon + \rho.
\]
Maximizing over unit-norm $w$ gives the group-$b$ bound. The same argument with group $a$ as the reference gives the symmetric group-$a$ bound, hence:

\begin{theorem}[Approximate fair feature learning]\label{thm:approx-fair-feature}
Assume $p_a \ne p_b$. Under $\varepsilon$-approximate parity and $\rho$-approximate class-conditional separation,
\begin{equation}
  \bigl\|\mu_{\Phi, 1, a} - \mu_{\Phi, 0, a}\bigr\|_{\Hs_\Zs}
  \;\le\; \frac{\varepsilon + \rho}{|p_a - p_b|}
  \text{ and }
  \bigl\|\mu_{\Phi, 1, b} - \mu_{\Phi, 0, b}\bigr\|_{\Hs_\Zs}
  \;\le\; \frac{\varepsilon + \rho}{|p_a - p_b|}
  \label{eq:approx-pareto}
\end{equation}
\end{theorem}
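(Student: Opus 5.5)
The plan is to derive, for each reference group, an exact decomposition of the group difference $\mu_{\Phi,a}-\mu_{\Phi,b}$ and then bound it with the triangle inequality. First I would apply the law of total expectation to the Bochner-integrable map $\phi_\Zs\circ\Phi$ conditional on $G=g$:
\[
  \mu_{\Phi,g} \;=\; p_g\,\mu_{\Phi,1,g} + (1-p_g)\,\mu_{\Phi,0,g}, \qquad g\in\{a,b\}.
\]
This is legitimate because $\Prb[G=g]>0$ and $0<p_g<1$, so all conditionals are well-defined. Bochner expectation commutes with conditioning on the finite partition $\{Y=0\},\{Y=1\}$.

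For the group-$b$ bound I would write $\mu_{\Phi,y,a} = \mu_{\Phi,y,b} + \delta^\Phi_y$ and subtract the two decompositions. This gives exactly the identity \eqref{eq:approx-master-Z}:
\[
  \mu_{\Phi,a}-\mu_{\Phi,b} = p_a\delta^\Phi_1 + (1-p_a)\delta^\Phi_0 + (p_a-p_b)(\mu_{\Phi,1,b}-\mu_{\Phi,0,b}).
\]
Rearranging and taking norms,
\[
  |p_a-p_b|\,\|\mu_{\Phi,1,b}-\mu_{\Phi,0,b}\|_{\Hs_\Zs} \le \|\mu_{\Phi,a}-\mu_{\Phi,b}\|_{\Hs_\Zs} + p_a\|\delta^\Phi_1\| + (1-p_a)\|\delta^\Phi_0\| \le \varepsilon + \rho.
\]
The last step uses the fact that $p_a\rho + (1-p_a)\rho = \rho$ is a convex combination. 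Dividing by $|p_a-p_b|>0$ gives the second inequality in \eqref{eq:approx-pareto}. Working directly with the norm avoids the supremum over unit $w$ in the text, although the two are equivalent by Cauchy--Schwarz and the choice $w$ equal to the normalized difference.

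For the group-$a$ bound I would repeat the argument with $a$ as the reference, writing $\mu_{\Phi,y,b} = \mu_{\Phi,y,a} - \delta^\Phi_y$. This yields
\[
  \mu_{\Phi,a}-\mu_{\Phi,b} = p_b\delta^\Phi_1 + (1-p_b)\delta^\Phi_0 + (p_a-p_b)(\mu_{\Phi,1,a}-\mu_{\Phi,0,a}),
\]
and the same triangle-inequality estimate then gives the first bound.

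The only step needing care is checking the algebra of this identity: the coefficients of the $\delta^\Phi_y$ terms must be nonnegative weights summing to one, so that the $\rho$-terms contribute at most $\rho$ and not $2\rho$. I will verify this by expanding $p_a\mu_{\Phi,1,a}+(1-p_a)\mu_{\Phi,0,a} - p_b\mu_{\Phi,1,b} - (1-p_b)\mu_{\Phi,0,b}$ term by term. Everything else is routine.
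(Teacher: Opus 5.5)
Your proposal is correct and matches the paper's proof: the same total-expectation decomposition in each group, the identity \eqref{eq:approx-master-Z} and its mirror with group $a$ as reference, and the convex-combination bound $p\rho + (1-p)\rho = \rho$. The only difference is cosmetic. You apply the triangle inequality to the norm directly, whereas the paper pairs with a unit $w$, applies Cauchy--Schwarz, and takes the supremum; the two are equivalent.
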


The usable downstream signal, the class-conditional mean-embedding gap within a group, is bounded by the joint fairness budget $\varepsilon + \rho$, rescaled by the base-rate gap. The bound vanishes linearly as $(\varepsilon, \rho) \to (0, 0)$; the exact collapse of \cref{thm:fair-feature} is the $(\varepsilon, \rho) = (0, 0)$ limit.

\paragraph{Pareto frontier for binary classifiers.}
For a binary classifier $\hat Y \in \{0, 1\}$, the approximate identity specializes to a concrete error lower bound. Write $p := \Prb[Y = 1]$ and 
\begin{align}
  \mathrm{DP\_gap} := |\Prb[\hat Y = 1 \mid G = a] - \Prb[\hat Y = 1 \mid G = b]|.
\end{align}

\begin{corollary}[Separation-conditional Pareto]
  \label{cor:pareto-bound}
Assume $p_a \ne p_b$. Let $\hat Y$ satisfy exact separation: 
\begin{align}
 \E[\hat Y \mid Y = y, G = a] = \E[\hat Y \mid Y = y, G = b] \text{ for }
 y \in \{0, 1\}
\end{align}
with common values $\TPR$ and $\FPR$. Then
\begin{equation}
  \mathrm{error}(\hat Y) \;\ge\; \min(p, 1 - p) \cdot \left(1 - {\mathrm{DP\_gap}}/{|p_a - p_b|}\right),
  \label{eq:pareto-binary}
\end{equation}
and the bound is tight, attained at $\FPR = 0$ or $\TPR = 1$.
\end{corollary}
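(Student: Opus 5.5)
The plan is to run the scalar analogue of the identity behind \cref{thm:approx-fair-feature} with $\rho = 0$, applied to the score $\hat Y$ itself rather than to a representation. First, apply the law of total expectation to $\hat Y$ within each group and use exact separation. This gives
\[
\Prb[\hat Y = 1 \mid G = g] \;=\; p_g\,\TPR + (1 - p_g)\,\FPR \qquad\text{for } g \in \{a,b\}.
\]
Subtracting the two groups, exactly as in \eqref{eq:kmr-strong-tle} and \eqref{eq:fair-feature-tle}, yields $\mathrm{DP\_gap} = |p_a - p_b|\cdot|\TPR - \FPR|$. Since $p_a \ne p_b$, we get $|\TPR - \FPR| = \mathrm{DP\_gap}/|p_a - p_b|$. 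This is the only place where the fairness structure and the base-rate gap enter.

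Second, express the error through the pooled rates. Exact separation makes the class-conditional rates group-invariant, so the pooled quantities satisfy $\Prb[\hat Y = 1 \mid Y = 1] = \TPR$ and $\Prb[\hat Y = 1 \mid Y = 0] = \FPR$ (mix over $G$ given $Y$). Hence
\[
\mathrm{error}(\hat Y) \;=\; p\,(1 - \TPR) + (1 - p)\,\FPR \;\ge\; \min(p, 1-p)\,\bigl(1 - (\TPR - \FPR)\bigr).
\]
The right-hand side is at least $\min(p,1-p)\,(1 - |\TPR - \FPR|)$, and substituting the first step gives \eqref{eq:pareto-binary}. If $\TPR < \FPR$, the factor is $1 + |\TPR - \FPR|$, which is even larger, so no case split is really needed beyond this remark. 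If $\mathrm{DP\_gap} > |p_a - p_b|$ is impossible by the first step, it need not be considered; the bound is meaningful because $|\TPR-\FPR| \le 1$.

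Third, for tightness, fix $J := \TPR - \FPR = \mathrm{DP\_gap}/|p_a-p_b| \in [0,1]$.
\begin{itemize}[leftmargin=1.5em, itemsep=1pt, topsep=2pt]
  \item If $p \le 1/2$, take $\FPR = 0$ and $\TPR = J$. The error is then $p(1-J)$.
  \item If $p > 1/2$, take $\TPR = 1$ and $\FPR = 1-J$. The error is then $(1-p)(1-J)$.
\end{itemize}
In both cases the bound is met with equality, and the first step guarantees these choices reproduce the prescribed $\mathrm{DP\_gap}$.

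The main obstacle is this tightness claim: one must argue that such $(\TPR, \FPR)$ pairs are realizable by a separation-satisfying $\hat Y$. I would handle this by allowing randomized classifiers with access to $Y$-informative features, e.g.\ by thresholding a perfect score and randomizing within each class independently of $G$. I would state this realizability assumption explicitly rather than claim it for every feature distribution.
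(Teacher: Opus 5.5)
Your proof is correct and follows the paper's route. Total probability plus separation gives $\mathrm{DP\_gap} = |p_a - p_b|\,|\TPR - \FPR|$, you decompose the error as $p(1-\TPR) + (1-p)\FPR$, and you check tightness at the same corners ($\FPR = 0$ for $p \le 1/2$, $\TPR = 1$ for $p \ge 1/2$). The one difference is the lower-bound step: you bound termwise using $p,\,1-p \ge \min(p,1-p)$, which covers both signs of $\TPR - \FPR$ in one line, whereas the paper minimizes the error as a linear function of $\FPR$ (or $\TPR$) separately in each sign case; your explicit realizability caveat for the tight corners is a point the paper leaves implicit.
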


The proof applies \cref{eq:approx-pareto} specialized to the binary indicator kernel: under separation, $\mathrm{DP\_gap} = |p_a - p_b| \cdot |\TPR - \FPR|$, and the error $p(1 - \TPR) + (1 - p)\FPR$ is minimized subject to $|\TPR - \FPR|$ fixed. See \cref{app:pareto-bound} for further details.

\paragraph{From Pok\'emon to KMR.}
It is tempting to ask whether we can recover KMR by adding enough constraints in the Pok\'emon theorem. \Cref{thm:kmr-strong} requires class-conditional balance to hold for the score $S$ at hand; once we ask the same conclusion for \emph{every} admissible $S = \inner{w}{\phi(X)}$, balance becomes the infinite-dimensional condition $\delta_0 = \delta_1 = 0$ in $\Hs$. More interestingly, \Cref{thm:pokemon-quant} bounds $\normH{\delta_y}$ when class-conditional differences lie in the orthogonal complement of an audit subspace and obey a source condition; the same Hilbert control converts approximate class-balance residuals into a tail bound at finite audit resolution. Along sequences of audit subspaces whose residuals vanish, this recovers \cref{thm:kmr-strong}. Proofs are in \cref{app:bridge-kmr,app:bridge-quant}.

\begin{theorem}[Pok\'emon--KMR bridge]\label{thm:bridge-kmr}
Fix $V_m \subseteq \Hs$ with $\dim V_m \le m$, and let $S = \inner{w}{\phi(X)} \in [0, 1]$ a.s.\ with $\normH{w} \le W$. Assume $\Prb[G=g] > 0$, $0 < p_g < 1$ for $g \in \{a,b\}$, $m$-directional class balance $P_{V_m}\delta_y = 0$ for $y \in \{0, 1\}$, group-conditional unbiasedness $\E[S \mid G = g] = p_g$, and $\Delta p \neq 0$. Then for every $t \in (0, 1]$,
\begin{equation}\label{eq:bridge-tail}
  \Prb\!\big[\,|S - Y| > t\,\big]
  \;\le\; \frac{W\,\rho_m}{|\Delta p|\, t}
  \text{ where }
  \rho_m \,:=\, \max_{g \in \{a,b\}}\big[\,p_{\bar g}\normH{\delta_1} + (1 - p_{\bar g})\normH{\delta_0}\,\big],
\end{equation}
where $\bar g$ denotes the opposite group.
\end{theorem}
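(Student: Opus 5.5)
The plan is to rerun the tower-property computation from the proof idea of \cref{thm:kmr-strong}, but without assuming exact balance. The balance defects are then expressed through the Hilbert-space vectors $\delta_y$, which Cauchy--Schwarz controls.

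\textbf{Step 1 (setup).} For each group $g$, write $\mu^+_g := \E[S \mid Y=1, G=g]$ and $\mu^-_g := \E[S \mid Y=0, G=g]$, and set $D_g := \mu^+_g - \mu^-_g$. By reproducing, $\mu^\pm_a - \mu^\pm_b = \inner{w}{\delta_{1/0}}$, so
\[
  |\mu^+_a - \mu^+_b| \le W\normH{\delta_1}, \qquad |\mu^-_a - \mu^-_b| \le W\normH{\delta_0}.
\]

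\textbf{Step 2 (tower identity).} Group-conditional unbiasedness together with the tower property gives
\[
  p_g = p_g\mu^+_g + (1-p_g)\mu^-_g \quad \text{for each } g.
\]
Subtract the two group identities and rewrite the group-$a$ quantities as group-$b$ quantities plus the defects from Step 1. I expect this to yield
\[
  \Delta p\,(1 - D_b) = p_a\inner{w}{\delta_1} + (1-p_a)\inner{w}{\delta_0}.
\]
Hence $|1 - D_b| \le W\bigl(p_a\normH{\delta_1} + (1-p_a)\normH{\delta_0}\bigr)/|\Delta p|$. Swapping the roles of $a$ and $b$ gives the same bound for $|1-D_a|$ with $p_b$ in place of $p_a$. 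Both bounds are therefore at most $W\rho_m/|\Delta p|$; the max over $g$ with the opposite group's base rate in $\rho_m$ is exactly this pairing.

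\textbf{Step 3 (from $1-D_g$ to an $L^1$ error).} Since $S\in[0,1]$, we have $|S-Y| = Y(1-S) + (1-Y)S$. Taking conditional expectations,
\[
  \E[|S-Y| \mid G=g] = p_g e^+_g + (1-p_g)e^-_g, \qquad e^+_g := 1-\mu^+_g \ge 0,\quad e^-_g := \mu^-_g \ge 0.
\]
The unbiasedness identity rearranges to $p_g e^+_g = (1-p_g)e^-_g$. Also $e^+_g + e^-_g = 1 - D_g$. Solving these relations gives
\[
  \E[|S-Y| \mid G=g] = 2p_g(1-p_g)(1-D_g) \le \tfrac12(1-D_g).
\]
This is bounded by $W\rho_m/|\Delta p|$ by Step 2, with a factor $2$ of slack.

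\textbf{Step 4 (Markov).} Apply Markov's inequality conditionally on $G=g$:
\[
  \Prb\bigl[|S-Y|>t \mid G=g\bigr] \le \frac{W\rho_m}{|\Delta p|\,t}.
\]
Averaging over $G$ then gives \eqref{eq:bridge-tail}.

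Note that the hypothesis $P_{V_m}\delta_y = 0$ is not used in deriving the inequality. It only matters downstream, where \cref{thm:pokemon-quant} bounds $\normH{\delta_y} = \normH{P_{V_m^\perp}\delta_y}$ by $R\lambda_{m+1}^r$ to obtain the $O(m^{-\alpha r})$ rate.

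The main obstacle is the bookkeeping in Step 2: I must make sure the cross-group substitution produces exactly the weights $p_{\bar g}$ and $1-p_{\bar g}$ appearing in $\rho_m$, and not $p_g$. I would first expand $p_a\mu^+_a + (1-p_a)\mu^-_a$ as $p_a(\mu^+_b + \inner{w}{\delta_1}) + (1-p_a)(\mu^-_b + \inner{w}{\delta_0})$ and check that the remaining terms collapse to $\Delta p(1-D_b)$. Everything after that is sign-preserving, because $e^\pm_g \ge 0$.
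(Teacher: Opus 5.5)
Your proof is correct and follows the paper's route: the same cross-group substitution giving $\Delta p\,(1-D_g) = p_{\bar g}\inner{w}{\delta_1} + (1-p_{\bar g})\inner{w}{\delta_0}$, then Cauchy--Schwarz and Markov. Your remark that $P_{V_m}\delta_y = 0$ is unused in this step matches the paper's own remark.

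The only real difference is your Step~3. There you reuse unbiasedness to get the exact conditional error $\E[|S-Y| \mid G=g] = 2p_g(1-p_g)(1-D_g)$, which gives a bound sharper by a factor $2p_g(1-p_g)\le\tfrac12$. The paper instead bounds $1-\mu^+_g$ and $\mu^-_g$ each by $W\rho_m/|\Delta p|$ via the range constraints $0\le\mu^\pm_g\le1$, and applies Markov separately on each $(Y,G)$ cell.
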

Now we can use assumptions on the geometry of the RKHS containing the data to obtain rate bounds in the same way as in Theorem~\ref{thm:pokemon-quant}.
\begin{theorem}[Quantitative Pok\'emon--KMR bridge]\label{thm:bridge-quant}
Under the hypotheses of \cref{thm:bridge-kmr}, polynomial eigendecay (A1), and a class-conditional source condition $\delta_y = \Sigma^r u_y$ with $\normH{u_y} \le R$ for $y \in \{0, 1\}$: at the top-$m$ Mercer eigenspace $V_m = \spanop\{e_1, \dots, e_m\}$, $\rho_m \le R\,\lambda_{m+1}^r = \Theta(R\,(m+1)^{-\alpha r})$, so~\eqref{eq:bridge-tail} decays at the Pok\'emon $m$-width rate. For any sequence of audit subspaces satisfying these hypotheses with $m \to \infty$, the vanishing-residual limit gives $S = Y$ a.s., matching \cref{thm:kmr-strong}.
\end{theorem}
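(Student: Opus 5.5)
The plan is to derive both claims from \cref{thm:bridge-kmr}, which already gives the tail bound $\Prb[|S-Y|>t] \le W\rho_m/(|\Delta p|\,t)$. It therefore suffices to bound $\rho_m$ through the class-conditional residuals $\normH{\delta_y}$, and then send $m \to \infty$.

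\textbf{Step 1 (spectral residual bound).} Take $V_m = \spanop\{e_1,\dots,e_m\}$. The hypothesis $P_{V_m}\delta_y = 0$ places $\delta_y$ in $V_m^\perp = \overline{\spanop}\{e_j : j>m\}$, so $\delta_y = P_{V_m^\perp}\delta_y$. The source condition gives
\[
\inner{\delta_y}{e_j} = \lambda_j^r \inner{u_y}{e_j}.
\]
Hence $\normH{\delta_y}^2 = \sum_{j>m} \lambda_j^{2r}\inner{u_y}{e_j}^2 \le \lambda_{m+1}^{2r}\normH{u_y}^2 \le R^2\lambda_{m+1}^{2r}$, using that the eigenvalues are nonincreasing. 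This is exactly the upper-bound half of \cref{thm:pokemon-quant}, applied to $\delta_y$ rather than $\delta$.

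\textbf{Step 2 (bounding $\rho_m$).} Since $\rho_m$ is a convex combination, with weights $p_{\bar g}$ and $1-p_{\bar g}$, of $\normH{\delta_1}$ and $\normH{\delta_0}$, each of which is at most $R\lambda_{m+1}^r$, we get $\rho_m \le R\lambda_{m+1}^r$. Assumption (A1) then gives $\lambda_{m+1}^r \le c_2^r (m+1)^{-\alpha r}$. Substituting into \eqref{eq:bridge-tail} yields $\Prb[|S-Y|>t] \le W R c_2^r (m+1)^{-\alpha r}/(|\Delta p|\,t)$. The matching $\Theta$ in the statement refers to $R\lambda_{m+1}^r$ itself, which is $\Theta(R(m+1)^{-\alpha r})$ by the two-sided bound in (A1).

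\textbf{Step 3 (limit).} Consider a sequence of subspaces $V_{m_k}$ with $m_k\to\infty$ for which the hypotheses of \cref{thm:bridge-kmr} hold for the same $S$ and the same distribution. The quantity $\Prb[|S-Y|>t]$ does not depend on $k$ but is bounded by a quantity tending to $0$, so it equals $0$ for every $t\in(0,1]$. Taking $t = 1/n$ and the countable union over $n$ gives $\Prb[S\ne Y]=0$, i.e.\ $S = Y$ a.s., which matches \cref{thm:kmr-strong}.

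\textbf{Main obstacle.} The subtle point is in Step 3. The $\delta_y$ are fixed by the data distribution, so if the hypotheses hold for all $m$, then $\delta_y \perp e_j$ for every $j$. Together with the source condition, which places $\delta_y$ in the closed range of $\Sigma$, this forces $\delta_y = 0$. The limit is therefore honest but degenerate: it amounts to exact class balance along every direction $w$. I would present it as the vanishing-residual statement, noting that any sequence with $\rho_{m_k}\to 0$ suffices, so that one need not claim the assumptions are informative at each finite $m$. I would also remark that, for $\alpha r>0$, the rate is uniform over the source class.
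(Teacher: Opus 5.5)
Your proposal is correct and follows the paper's proof essentially step for step: you bound $\normH{\delta_y}\le R\lambda_{m+1}^r$ via the source-condition saturation (upper-bound) half of \cref{thm:pokemon-quant}, bound $\rho_m$ as a convex combination, substitute into \eqref{eq:bridge-tail}, and take the limit through a countable union over $t$. Your extra observations are also right and go slightly beyond what the paper spells out: $S$ and the law must be held fixed for the limit to make sense, and $P_{V_m}\delta_y=0$ at top-$m$ eigenspaces for unboundedly many $m$ already forces $\delta_y=0$, so $\rho_m=0$ and the limit is exactly Stronger KMR.
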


\paragraph{Budget allocation.}
The four results recast finite mean-fairness auditing as a \emph{budget-allocation} problem. The budget is $\MMD(P_a, P_b) = \normH{\delta}$, a property of the data. In other words, if the distributions differ significantly between groups, it is quite challenging to ensure that a score (and thus a classifier) is both fair and useful. On the other hand, if the groups are very similar, then being fair is relatively easy. This makes intuitive sense: if groups behave very differently statistically, algorithmic approaches are \emph{not} the right solution. Instead, a proper policy intervention is required---either explicit, transparent treatment disparity \citep{lipton2018mitigating} or domain-specific separate scoring rules, as in Wisconsin's gender-specific COMPAS risk scores upheld in \emph{State v.\ Loomis} \citep{harvardlawreview2017loomis}.

Algorithmically, a fairness intervention spends this MMD budget by choosing which directions in $\Hs$ to zero out; approximate satisfaction deducts only a fraction. Distributional parity requires the whole group-difference vector to vanish, while representation-level parity plus separation forces class collapse under unequal base rates. The practical question is which directions in $\Hs$ are worth allocating the budget to. Existing work characterizes the resulting trade-off at a Bayes-optimal level \citep{menon2018cost}, maps its Pareto front \citep{wei2022pareto}, and diagnoses tensions among competing group-fairness criteria \citep{kim2020fact}; principled spectral budget allocation in $\Hs$ remains open.

\section{Experiments}
\label{sec:experiments}

The experiments that follow are population-level predictions of our theorems rendered on standard fairness benchmarks: consistency checks, not standalone empirical claims.

\paragraph{Protocol.}
We use Adult Income ($n \approx 45$K, sex), COMPAS ($n \approx 5$K, race), and an ACS PUMS subsample ($n = 20$K, race/sex), with Gaussian RBF kernels at median-heuristic bandwidth. Mean-embedding estimates carry bootstrap 95\% confidence intervals ($B = 1000$); MMD and HSIC tests use $999$ permutations, with Benjamini--Hochberg FDR correction where multiple criteria are tested simultaneously. Preprocessing, splits, kernel-bandwidth sensitivity, and further methodological details are in \cref{app:experiments}.

\begin{figure}[t]
  \centering
  \includegraphics[width=0.33\linewidth]{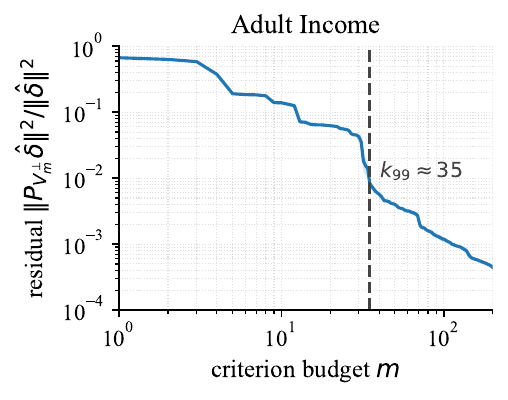}\hfil
  \includegraphics[width=0.33\linewidth]{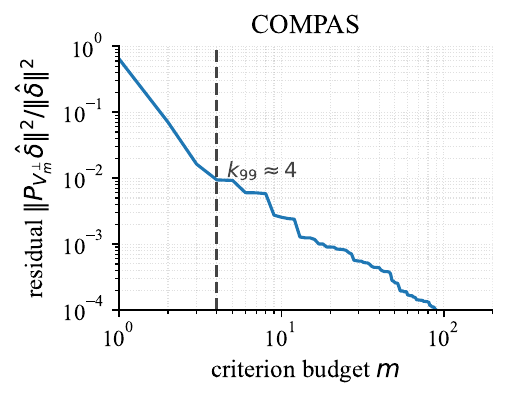}\hfil
  \includegraphics[width=0.33\linewidth]{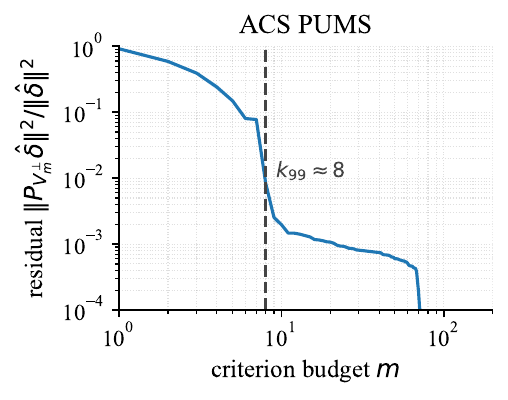}
  \par\vskip 2pt
  \includegraphics[width=0.33\linewidth]{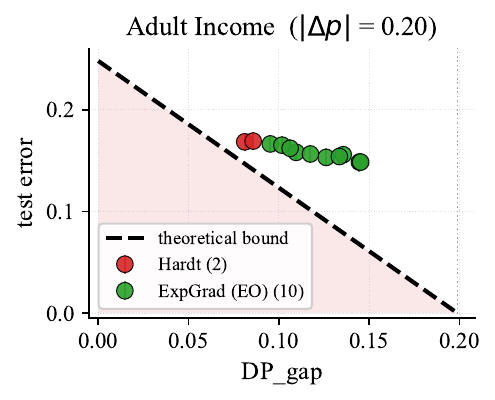}\hfil
  \includegraphics[width=0.33\linewidth]{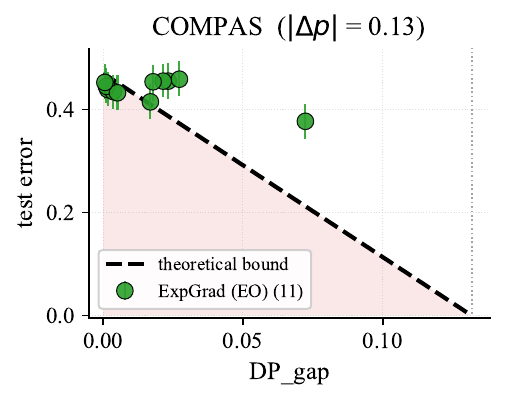}\hfil
  \includegraphics[width=0.33\linewidth]{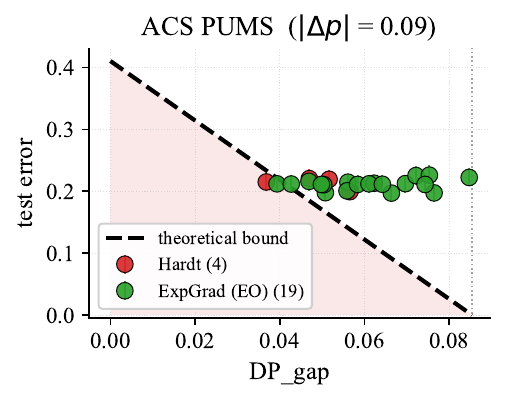}
  \par\vskip 2pt
  \includegraphics[width=0.33\linewidth]{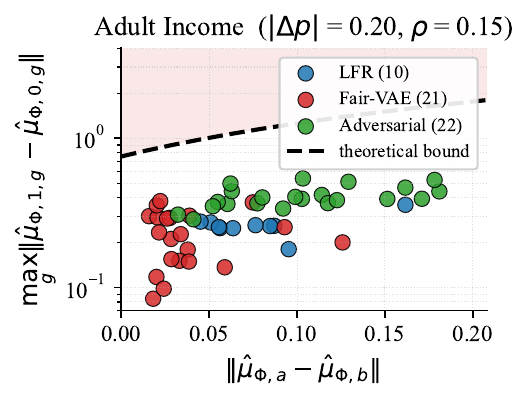}\hfil
  \includegraphics[width=0.33\linewidth]{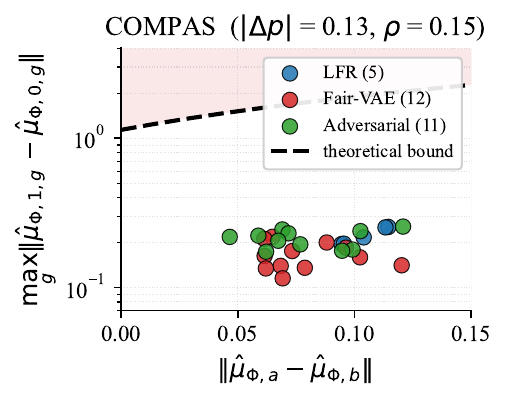}\hfil
  \includegraphics[width=0.33\linewidth]{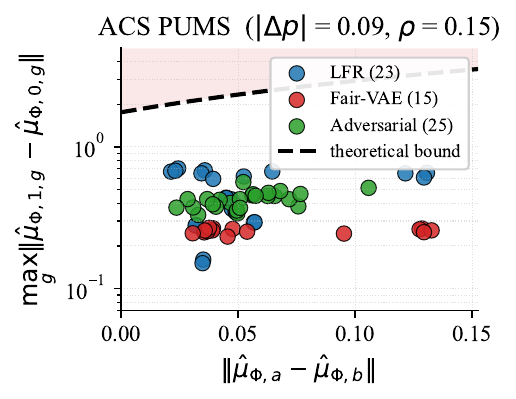}
  \caption{Empirical illustrations on standard fairness benchmarks. \textbf{Columns} (left to right): Adult Income, COMPAS, ACS PUMS. \textbf{Rows} (top to bottom):
    \textbf{(i)} residual fraction $\|P_{V_m^\perp}\hat\delta\|^2/\|\hat\delta\|^2$ vs.\ criterion budget $m$ on log--log axes, with $V_m$ the top-$m$ eigenspace of the empirical pooled covariance; the dotted vertical line marks $k_{99}$, the budget at which the residual reaches $1\%$.
    \textbf{(ii)} per-seed (DP-gap, error) scatter for near-separation classifiers, with the lower bound of \cref{cor:pareto-bound} overlaid (dashed); $|\Delta p|$ marked on the horizontal axis.
    \textbf{(iii)} learned encoders (Fair-VAE, adversarial debiasing, LFR) in the $(\|\hat\mu_{\Phi,a} - \hat\mu_{\Phi,b}\|,\ \max_g\|\hat\mu_{\Phi,1,g} - \hat\mu_{\Phi,0,g}\|)$ plane, one marker per (method, $\lambda$, seed); the dashed line $y = (x + \rho)/|\Delta p|$ is the group-specific bound of \cref{thm:approx-fair-feature}, the shaded region is forbidden.
  }
  \label{fig:experiments}
\end{figure}

\paragraph{Quantitative Pok\'emon rate (\cref{fig:experiments}, top row).}
Each panel plots the residual $\|P_{V_m^\perp}\hat\delta\|^2 / \|\hat\delta\|^2$ (vertical axis) against the criterion budget $m$ (horizontal axis), where $V_m$ is the top-$m$ eigenspace of the empirical pooled covariance $\hat\Sigma$, the minimax-optimal subspace over the source-condition ellipsoid identified in \cref{thm:pokemon-quant}. The curve is therefore a spectral audit curve for the observed $\hat\delta$, not the pointwise best possible subspace for that single vector (which would simply include $\hat\delta$). Decay is consistent with the polynomial $m$-width heuristic (this is a log-log plot where polynomial decay translates to a line): Adult captures only $\approx 41\%$ of $\|\hat\delta\|^2$ in the top three Mercer eigendirections and requires $k_{99} \approx 35$ criteria to reach a $1\%$ residual; ACS PUMS is similar ($\approx 61\%$, $k_{99} \approx 8$). COMPAS concentrates faster ($\approx 98\%$ in the top three, $k_{99} = 4$) because it has only $d = 8$ one-hot features, giving the eigenspectrum a small effective support; this accelerates the empirical decay but does not contradict the minimax theorem.

\paragraph{Separation-conditional Pareto bound (\cref{fig:experiments}, middle row).}
Each panel plots test error (vertical axis) against the demographic-parity gap $\mathrm{DP\_gap}$ (horizontal axis), one point per (seed, method, constraint-strength) combination, restricted to \emph{separation-enforcing} classifiers (Hardt post-processing and ExponentiatedGradient under the equalized-odds constraint) at near-separation $\mathrm{EO\_gap} := \max(|\TPR_a - \TPR_b|, |\FPR_a - \FPR_b|) \le 0.05$; error bars are test-set bootstrap 95\% CIs. The dashed curve is the lower bound $\min(p, 1-p)(1 - \mathrm{DP\_gap}/|\Delta p|)$ of \cref{cor:pareto-bound}; the shaded region below it is infeasible for any exactly separated classifier, so empirical points should lie on or above the curve within sampling error. Of 46 near-separation points across the three datasets (12 Adult, 11 COMPAS, 23 ACS PUMS), one has its bootstrap-CI upper bound below the curve: a single ACS PUMS Hardt configuration at the sampling-noise floor; at the stricter gate $\mathrm{EO\_gap} \le 0.02$, zero of 17 violate. The empirical frontier respects the theoretical bound within test-set uncertainty; full protocol and per-seed tables in \cref{app:experiments}.

\paragraph{Forbidden corner (\cref{fig:experiments}, bottom row).}
Each panel plots the group-specific class-conditional signal $\max_g\|\hat\mu_{\Phi, 1,g} - \hat\mu_{\Phi, 0,g}\|$ (vertical axis, log scale) against the representation parity gap $\|\hat\mu_{\Phi, a} - \hat\mu_{\Phi, b}\|$ (horizontal axis), one marker per (method, $\lambda$, seed) learned encoder (LFR \citep{zemel2013learning}, Fair-VAE, or adversarial debiasing \citep{edwards2016censoring}), gated to configurations whose class-conditional separation gap satisfies $\max_y \|\hat\delta_y^\Phi\| \le \rho$ with $\rho = 0.15$; encoders failing this gate are not plotted. The overlaid dashed line $y = (x + \rho)/|\Delta p|$ is the group-specific bound of \cref{thm:approx-fair-feature}, with the horizontal axis identified as $\varepsilon$; the shaded red region above the line is forbidden. Two features predicted by the theorem are visible on every panel: (i) the attainable group-specific class-conditional signal grows linearly with the parity gap at slope $\le 1/|\Delta p|$, matching the coupling of \cref{thm:approx-fair-feature}; and (ii) the upper-left corner (small parity gap with large class-conditional signal) is empirically empty, as the exact $(\varepsilon, \rho) \to (0, 0)$ limit of \cref{thm:fair-feature} forbids. Encoder counts and per-method averages are in \cref{app:experiments}.

\section{Discussion}
\label{sec:discussion}

The results of this paper share a common engine: the law of total expectation couples every linear fairness criterion to the base-rate gap through a single identity on conditional mean embeddings. Finite scalar audits cannot exhaust this coupling, and learned representations cannot circumvent it. This recasts finite mean-fairness auditing as a \emph{budget-allocation} problem rather than a one-shot certificate. The total group-difference budget is the MMD $\|\delta\|_\Hs$, a property of the data-generating process; a fairness intervention spends this budget by choosing which directions in $\Hs$ to zero. Exact distributional parity requires eliminating the whole vector, while representation-level parity plus separation forces class collapse under unequal base rates. The practical question is how to allocate the MMD across fairness criteria, and the spectral decomposition gives one principled minimax ranking.

\paragraph{Scope and future extensions.}
Our main results are stated under a binary protected attribute $G \in \{a, b\}$ and, where relevant, a binary outcome $Y$ and a scalar score $S$. Each of these restrictions is conventional and non-essential. The $K > 2$-group generalization (replacing $|\Delta p|$ by a base-rate-vector norm), the multi-class and real-valued $Y$ regimes, the vector-valued scoring setting, and the randomized-classifier extension all flow from the same arguments; we defer their formal statements to a companion piece. The sharpest structural questions we leave open are the \emph{dimensional transition} at finite $\dim \Hs$ (where the Pok\'emon theorem inverts, and full fairness becomes reachable with $m \ge \dim \Hs$ criteria), a full treatment of the Indep$+$Sufficiency resistance to mean-level relaxation via the recalibration lemma $T := \E[Y \mid S]$, and the sharpening of \cref{thm:fair-feature} under universal kernels to a sup-norm no-predictor-distinguishes-the-classes statement. Individual, causal, and counterfactual fairness lie outside the scope of the present observational framework.

The modern fairness-impossibility literature rediscovered what educational-testing researchers had already established over the prior half-century \citep{darlington1971another,thorndike1971concepts,cole1973bias,petersen1976evaluation,hutchinson2019fifty}. That the same impossibility surfaces whenever one formalizes ``absence of bias'' via scalar moments of the score is not a notational accident; it is a structural feature of any setting where groups are distributionally distinct. 



\begin{ack}
We thank Yu-Xiang Wang for suggesting that a connection between the Pokemon theorem and KMR should exist, which led to the Pokemon-KMR bridge theorems. 
\end{ack}

\newpage

\bibliographystyle{plainnat}
\bibliography{references}

\appendix
\newpage

\section{Proofs}
\label{app:main}

\subsection{Dictionary of fairness criteria as RKHS directions}
\label{app:dictionary}

\Cref{thm:pokemon} refers to ``linear mean-fairness criteria'' $v_i \in \Hs$ satisfying $\inner{v_i}{\delta} = 0$. The following dictionary maps the standard scalar criteria to RKHS directions via the reproducing property $\inner{v}{\mu_P} = \E_P[v(X)]$.

\begin{description}[leftmargin=1.5em, itemsep=2pt]
\item[Demographic parity.] For a score $S = \inner{w}{\phi(X)}$, the constraint $\E[S \mid G{=}a] = \E[S \mid G{=}b]$ is $\inner{w}{\delta} = 0$. The criterion direction is $v = w$.
\item[Class-conditional balance (separation).] $\E[S \mid Y{=}y, G{=}a] = \E[S \mid Y{=}y, G{=}b]$ for each $y$ is $\inner{w}{\delta_y} = 0$. Each class gives a separate linear criterion.
\item[Calibration / sufficiency.] The condition $\E[Y \mid S{=}s, G{=}g] = s$ is a \emph{distributional} statement ($Y \indep G \mid S$), not a single linear functional of $\delta$. This is why calibration resists the mean-level relaxation discussed in \cref{sec:kmr-strong} and why the Pok\'emon theorem's ``linear mean-fairness'' scope excludes it.
\end{description}

\subsection{Proof of the Stronger KMR theorem (\cref{thm:kmr-strong})}
\label{app:kmr-proof}

Let $S \in [0, 1]$ satisfy conditions (i)--(iii) of \cref{thm:kmr-strong}. Define
\begin{align}
  \mu^+  & := \E[S \mid Y = 1, G = a] && = \E[S \mid Y = 1, G = b] \\
  \mu^-  & := \E[S \mid Y = 0, G = a] && = \E[S \mid Y = 0, G = b],
\end{align}
where the equalities on the right hold by requirements (ii) and (iii). By the law of total expectation applied to $S$ conditional on $G = g$,
\begin{align}
  \E[S \mid G = g] =& \,
    \Prb[Y = 1 \mid G = g] \cdot \E[S \mid Y = 1, G = g] + \\
    \nonumber
    & \, \Prb[Y = 0 \mid G = g] \cdot \E[S \mid Y = 0, G = g] \\
  =& \, p_g \cdot \mu^+ + (1 - p_g) \cdot \mu^-.
\end{align}
Hypothesis (i) gives $\E[S \mid G = g] = p_g$. Hence,
\begin{equation}
  p_g \;=\; p_g\, \mu^+ + (1 - p_g)\, \mu^-
  \qquad \text{for } g \in \{a, b\}.
  \label{eq:app-kmr-two-eq}
\end{equation}
Subtracting the two equations $p_a - p_b$ yields
\[
  p_a - p_b \;=\; (p_a - p_b)\, \mu^+ \;-\; (p_a - p_b)\, \mu^-
  \;=\; (p_a - p_b)\, (\mu^+ - \mu^-),
\]
so
\begin{equation}
  (p_a - p_b)\,\bigl[\,1 - (\mu^+ - \mu^-)\,\bigr] \;=\; 0.
  \label{eq:app-kmr-dichotomy}
\end{equation}
\Cref{eq:app-kmr-dichotomy} admits two cases: $p_a = p_b$ or $\mu^+ - \mu^- = 1$.

\paragraph{The non-degenerate case $p_a \neq p_b$.}
Here $\mu^+ - \mu^- = 1$. We know that $S \in [0, 1]$ almost surely and by design $\mu^- \ge 0$ and $\mu^+ \le 1$. As such, the only solution is $\mu^+ = 1$ and $\mu^- = 0$. We now promote these two scalar identities to almost-sure statements on $S$.

Fix $g \in \{a, b\}$ with $\Prb[Y = 0, G = g] > 0$ (which holds whenever $0 < p_g < 1$, excluding degenerate settings by the setup in \cref{sec:background}). The random variable $S$ is non-negative and
\[
  \E[S \mid Y = 0, G = g] \;=\; \mu^- \;=\; 0.
\]
Conditional on $\{Y = 0, G = g\}$, applying Markov's inequality to the non-negative random variable $S$ gives, for every $t > 0$,
\[
  \Prb[S > t \mid Y = 0, G = g] \;\le\; \frac{\E[S \mid Y = 0, G = g]}{t} \;=\; 0.
\]
Taking $t \downarrow 0$ along a countable sequence, $\Prb[S > 0 \mid Y = 0, G = g] = 0$, i.e.\ $S = 0$ almost surely on $\{Y = 0, G = g\}$.

By the same reasoning, $1 - S \in [0, 1]$ is non-negative and $\E[1 - S \mid Y = 1, G = g] = 1 - \mu^+ = 0$; Markov gives $1 - S = 0$ a.s.\ on $\{Y = 1, G = g\}$, i.e.\ $S = 1$ a.s.\ on $\{Y = 1, G = g\}$. Combining the two cases across $g \in \{a, b\}$, $S = Y$ almost surely on
\[
  \bigl\{Y = 0\bigr\} \cup \bigl\{Y = 1\bigr\} \;=\; \Omega,
\]
so $S = Y$ almost surely. 

\paragraph{Remark on the degenerate cases.}
If $p_g \in \{0, 1\}$ for some $g$, then one of $\Prb[Y = 0, G = g]$, $\Prb[Y = 1, G = g]$ is zero and the corresponding conditional expectation $\mu^\pm$ is undefined; the theorem's hypotheses are vacuous on that event and the conclusion $S = Y$ a.s.\ continues to hold on the (trivially known) class. We exclude these cases from the statement for notational convenience.
\hfill$\square$

\subsection{Proof of the quantitative Pok\'emon theorem (\cref{thm:pokemon-quant})}
\label{app:pokemon-quant}

Under assumptions \textbf{(A1)} and \textbf{(A2)} of \cref{sec:pokemon}, we prove the two inequalities in~\cref{eq:pokemon-quant} and identify the minimizer.

\paragraph{Upper bound (source-condition saturation).}
Take any $\delta \in B_r(R)$, so $\delta = \Sigma^r u$ with $\normH{u} \le R$. By self-adjointness of $\Sigma^r$ and the Mercer basis relation $\Sigma^r e_j = \lambda_j^r e_j$,
\[
  \inner{\delta}{e_j} \;=\; \inner{\Sigma^r u}{e_j} \;=\; \inner{u}{\Sigma^r e_j} \;=\; \lambda_j^r\, \inner{u}{e_j}.
\]
Projecting onto $V_m^\perp$ and using Parseval together with the monotonicity $\lambda_j \le \lambda_{m+1}$ for $j \ge m + 1$,
\[
  \normH{P_{V_m^\perp}\delta}^2
  \;=\; \sum_{j > m} \inner{\delta}{e_j}^2
  \;=\; \sum_{j > m} \lambda_j^{2r}\,\inner{u}{e_j}^2
  \;\le\; \lambda_{m+1}^{2r}\, \sum_{j > m} \inner{u}{e_j}^2
  \;\le\; \lambda_{m+1}^{2r}\, \normH{u}^2
  \;\le\; R^2 \lambda_{m+1}^{2r}.
\]
Taking the supremum over $\delta \in B_r(R)$,
\[
  \sup_{\delta \in B_r(R)} \normH{P_{V_m^\perp}\delta}^2 \;\le\; R^2 \lambda_{m+1}^{2r}.
\]

\paragraph{Lower bound (pigeonhole in the top-$(m+1)$ block).}
Let $V \subset \Hs$ be any subspace with $\dim V \le m$. Since $\dim \spanop\{e_1, \ldots, e_{m+1}\} = m + 1 > m$, the intersection $V^\perp \cap \spanop\{e_1, \ldots, e_{m+1}\}$ contains a unit vector $u^\star = \sum_{j = 1}^{m+1} u_j^\star e_j$ with $\sum_{j = 1}^{m+1} (u_j^\star)^2 = 1$. Set
\[
  \delta^\star \;:=\; R\, \lambda_{m+1}^r\, u^\star.
\]
Then $\delta^\star \in B_r(R)$: in Mercer coordinates $\inner{\delta^\star}{e_j}^2 / \lambda_j^{2r} = R^2 \lambda_{m+1}^{2r} (u_j^\star)^2 / \lambda_j^{2r} \le R^2 (u_j^\star)^2$ for $j \le m + 1$ (since $\lambda_j^{2r} \ge \lambda_{m+1}^{2r}$ by monotonicity and $r > 0$), and the coefficients vanish for $j > m+1$; summing, $\sum_j \inner{\delta^\star}{e_j}^2 / \lambda_j^{2r} \le R^2$. Since $u^\star \in V^\perp$, $P_V \delta^\star = 0$ and
\[
  \normH{P_{V^\perp} \delta^\star}^2 \;=\; \normH{\delta^\star}^2 \;=\; R^2 \lambda_{m+1}^{2r}.
\]
As $V$ was an arbitrary subspace of dimension $\le m$,
\[
  \inf_{\dim V \le m}\, \sup_{\delta \in B_r(R)}\, \normH{P_{V^\perp} \delta}^2 \;\ge\; R^2 \lambda_{m+1}^{2r}.
\]

\paragraph{Attainment at $V_m$ and the $\Theta$ rate.}
The upper bound shows that the infimum is at most $R^2 \lambda_{m+1}^{2r}$, attained by $V = V_m$. Combined with the lower bound, we obtain equality: the inf-sup equals $R^2 \lambda_{m+1}^{2r}$ exactly. Eigendecay \textbf{(A1)} gives $c_1^{2r} (m+1)^{-2\alpha r} \le \lambda_{m+1}^{2r} \le c_2^{2r} (m+1)^{-2\alpha r}$, whence $R^2 \lambda_{m+1}^{2r} = \Theta(R^2 (m+1)^{-2\alpha r})$. \hfill$\square$

\begin{remark}[Connection to Kolmogorov $m$-widths]
The value $\inf_{\dim V \le m} \sup_{h \in E} \normH{P_{V^\perp} h}$ is by definition the Kolmogorov $m$-width $d_m(E, \Hs)$ of $E \subset \Hs$ \citep[Ch.~I]{pinkus1985widths}. For a Hilbert ellipsoid with non-increasing semi-axes $a_1 \ge a_2 \ge \cdots$, the classical identification gives $d_m(E, \Hs) = a_{m+1}$, attained on the top-$m$ axes \citep[Ch.~IV]{pinkus1985widths}. Specializing to $B_r(R)$ with $a_j = R \lambda_j^r$ reproduces \cref{thm:pokemon-quant}.
\end{remark}

\subsection{Proof of the fair-feature-learning impossibility (\cref{thm:fair-feature})}
\label{app:fair-feature-proof}

Let $k_\Zs$ be characteristic on $\Zs$ with measurable feature map $\phi_\Zs$, let $\Phi : \Xs \to \Zs$ be measurable, and assume $p_a \ne p_b$ together with conditions (a) and (b) of \cref{thm:fair-feature}. By hypothesis the Bochner integrals defining $\mu_{\Phi, g}$ and $\mu_{\Phi, y, g}$ exist; this is automatic for the bounded measurable kernels used in the paper.

\paragraph{Mean-embedding collapse.}
Under (b), write $\mu_{\Phi, y} := \mu_{\Phi, y, a} = \mu_{\Phi, y, b}$, group-invariant for each $y$. The law of total expectation applied to the Bochner-valued random element $\phi_\Zs(\Phi(X))$ conditional on $G = g$ gives
\begin{align*}
  \mu_{\Phi, g}
  =& \E\!\left[\phi_\Zs(\Phi(X)) \mid G = g\right] \\
  =& \Prb[Y = 1 \mid G = g] \cdot \E\!\left[\phi_\Zs(\Phi(X)) \mid Y = 1, G = g\right] + \\
  & \Prb[Y = 0 \mid G = g] \cdot \E\!\left[\phi_\Zs(\Phi(X)) \mid Y = 0, G = g\right] \\
  =& p_g \cdot \mu_{\Phi, 1} + (1 - p_g) \cdot \mu_{\Phi, 0}.
\end{align*}
Subtracting the equation for $\mu_{\Phi, a}$ from the one for $\mu_{\Phi, b}$ yields:
\[
  \mu_{\Phi, a} - \mu_{\Phi, b}
  \;=\; (p_a - p_b)\, \mu_{\Phi, 1} + \bigl((1 - p_a) - (1 - p_b)\bigr)\, \mu_{\Phi, 0}
  \;=\; (p_a - p_b)\,(\mu_{\Phi, 1} - \mu_{\Phi, 0}).
\]
Condition (a) sets the left-hand side to zero in $\Hs_\Zs$. Since $p_a - p_b \ne 0$, the right-hand side can vanish only if $\mu_{\Phi, 1} - \mu_{\Phi, 0} = 0$, i.e.\ $\mu_{\Phi, 1} = \mu_{\Phi, 0}$.

\paragraph{Distributional lift.}
For a distribution $P$ on $\Xs$, write $\Phi_* P$ for its \emph{pushforward} under $\Phi$, a distribution on $\Zs$ defined by $(\Phi_* P)(A) := P(\Phi^{-1}(A))$ for measurable $A \subseteq \Zs$. Set $Q_{y, g} := \Phi_* P(X \mid Y = y, G = g)$, a distribution on $\Zs$; by the change-of-variables identity, $\mu_{\Phi, y, g} = \mu_{Q_{y, g}}$ in $\Hs_\Zs$. Since $k_\Zs$ is characteristic, the mean-embedding map is injective on distributions on $\Zs$, so hypothesis (b) gives $Q_{y, a} = Q_{y, b}$ on $\Zs$. We stress that this is a distributional equality of $\Phi(X)$, not of $X$. Denote the common value by $Q_y$. The mean-embedding collapse $\mu_{\Phi, 0} = \mu_{\Phi, 1}$ is then $\mu_{Q_0} = \mu_{Q_1}$, and a second application of injectivity gives $Q_0 = Q_1$ on $\Zs$. Since $\Phi(X) \mid Y = y$ is distributed as $Q_y$ (pool over $G$), this yields $\Phi(X) \mid Y = 0 \stackrel{d}{=} \Phi(X) \mid Y = 1$, i.e.\ $\Phi(X) \indep Y$.
 
\paragraph{Downstream consequence.} 
For any measurable $f : \Zs \to \R$ with $\E[|f(\Phi(X))|] < \infty$, independence $\Phi(X) \indep Y$ gives
\[
  \E\bigl[f(\Phi(X)) \mid Y = 1\bigr] \;=\; \E[f(\Phi(X))] \;=\; \E\bigl[f(\Phi(X)) \mid Y = 0\bigr].
\]
No measurable function of the representation separates the two classes on average. \hfill$\square$
 
\begin{remark}[Characteristic suffices; universal sharpens]
\label{rem:fair-feature-universal}
The argument uses characteristic $k_\Zs$ to lift mean-embedding equality to distributional equality. Universal $k_\Zs$ \citep{steinwart2001influence} would additionally give a uniform sup-norm approximation: any continuous $f : \Zs \to \R$ on a compact $\Zs$ is approximable by $\Hs_\Zs$-functions arbitrarily closely, so the conclusion ``no function separates the classes'' becomes quantitative in sup-norm rather than only in $\Hs_\Zs$-inner product.
\end{remark}

\subsection{Derivation of the approximate coupling identity, \cref{eq:approx-master-Z}}
\label{app:approx-identity}

Apply the law of total expectation to $\phi_\Zs(\Phi(X))$ conditional on $G = g$, partitioning on $Y$:
\[
  \mu_{\Phi, g}
  \;=\; p_g\, \mu_{\Phi, 1, g} + (1 - p_g)\, \mu_{\Phi, 0, g}
  \qquad \text{for } g \in \{a, b\}.
\]
Substitute $\mu_{\Phi, y, a} = \mu_{\Phi, y, b} + \delta_y^\Phi$ into the $g = a$ equation and subtract the $g = b$ equation:
\begin{align*}
  \mu_{\Phi, a} - \mu_{\Phi, b}
  &\;=\; p_a\,(\mu_{\Phi, 1, b} + \delta_1^\Phi) + (1 - p_a)\,(\mu_{\Phi, 0, b} + \delta_0^\Phi) - p_b\, \mu_{\Phi, 1, b} - (1 - p_b)\, \mu_{\Phi, 0, b} \\
  &\;=\; p_a\, \delta_1^\Phi + (1 - p_a)\, \delta_0^\Phi + (p_a - p_b)\, \mu_{\Phi, 1, b} + \bigl((1 - p_a) - (1 - p_b)\bigr)\, \mu_{\Phi, 0, b} \\
  &\;=\; p_a\, \delta_1^\Phi + (1 - p_a)\, \delta_0^\Phi + (p_a - p_b)\,(\mu_{\Phi, 1, b} - \mu_{\Phi, 0, b}).
\end{align*}
This is \cref{eq:approx-master-Z}. Under exact separation ($\delta_0^\Phi = \delta_1^\Phi = 0$) it reduces to the identity used in the proof of \cref{thm:fair-feature}. Taking the inner product with a unit $w \in \Hs_\Zs$, Cauchy--Schwarz bounds the first two right-hand terms by $p_a \rho$ and $(1 - p_a) \rho$ respectively, summing to $\rho$; the left-hand side is bounded by $\varepsilon$. Rearranging gives
\[
  |p_a - p_b| \cdot |\inner{w}{\mu_{\Phi, 1, b} - \mu_{\Phi, 0, b}}| \le \varepsilon + \rho.
\]
The symmetric identity
\[
  \mu_{\Phi, a} - \mu_{\Phi, b}
  \;=\; p_b\, \delta_1^\Phi + (1 - p_b)\,\delta_0^\Phi
      + (p_a - p_b)(\mu_{\Phi,1,a} - \mu_{\Phi,0,a})
\]
is obtained by substituting $\mu_{\Phi,y,b}=\mu_{\Phi,y,a}-\delta_y^\Phi$ into the $g=b$ equation instead. The same Cauchy--Schwarz argument gives the group-$a$ bound. Taking $\sup_{\|w\| \le 1}$ in both inequalities yields \cref{eq:approx-pareto}. \hfill$\square$

\subsection{Proof of the separation-conditional Pareto bound, Corollary~\ref{cor:pareto-bound}}
\label{app:pareto-bound}

Under exact separation, $\TPR := \Prb[\hat Y = 1 \mid Y = 1]$ and $\FPR := \Prb[\hat Y = 1 \mid Y = 0]$ are group-invariant. The law of total probability applied, conditioning on $G = g$, gives
\[
  \Prb[\hat Y = 1 \mid G = g] \;=\; p_g\, \TPR + (1 - p_g)\, \FPR \;=\; \FPR + p_g\,(\TPR - \FPR),
\]
and subtracting across groups,
\begin{equation}
  \mathrm{DP\_gap} \;=\; |p_a - p_b| \cdot |\TPR - \FPR|.
  \label{eq:dp-gap-binary}
\end{equation}
Set $\kappa := |\TPR - \FPR| = \mathrm{DP\_gap}/|p_a - p_b| \in [0, 1]$. First suppose $\TPR - \FPR = \kappa \ge 0$. The overall error decomposes as $\mathrm{error}(\hat Y) = p\,(1 - \TPR) + (1 - p)\, \FPR$; eliminating $\TPR = \FPR + \kappa$,
\[
  \mathrm{error}(\hat Y)
  \;=\; p\,(1 - \FPR - \kappa) + (1 - p)\,\FPR
  \;=\; p\,(1 - \kappa) + (1 - 2p)\,\FPR,
\]
a linear function of $\FPR$ on the feasible range $[0, 1 - \kappa]$. Minimizing:
\begin{itemize}[leftmargin=1.5em, itemsep=0pt, topsep=0pt]
  \item $p \le 1/2$: coefficient $1 - 2p \ge 0$, minimum at $\FPR = 0$, $\TPR = \kappa$, giving $\mathrm{error} = p\,(1 - \kappa)$.
  \item $p \ge 1/2$: coefficient $1 - 2p \le 0$, minimum at $\FPR = 1 - \kappa$, $\TPR = 1$, giving $\mathrm{error} = (1 - p)\,(1 - \kappa)$.
\end{itemize}
This gives the claimed lower bound in the positively correlated case, with equality at the described corners. If instead $\TPR - \FPR = -\kappa$, write $\FPR=\TPR+\kappa$ with $\TPR \in [0,1-\kappa]$. Then
\[
  \mathrm{error}(\hat Y)
  \;=\; p(1-\TPR)+(1-p)(\TPR+\kappa)
  \;=\; p + (1-p)\kappa + (1-2p)\TPR.
\]
For $p \le 1/2$ the minimum is at $\TPR=0$, giving $p+(1-p)\kappa \ge p(1-\kappa)$; for $p \ge 1/2$ the minimum is at $\TPR=1-\kappa$, giving $(1-p)+p\kappa \ge (1-p)(1-\kappa)$. Thus, the same lower bound holds in the negatively correlated case, though it is generally not tight there. Therefore,
\[
  \mathrm{error}(\hat Y) \ge \min(p, 1 - p)\,(1 - \kappa)
  = \min(p, 1 - p)\,(1 - \mathrm{DP\_gap}/|p_a - p_b|).
\]
\hfill$\square$

\subsection{Proof of the Pok\'emon--KMR bridge (\cref{thm:bridge-kmr})}
\label{app:bridge-kmr}

Fix $V_m \subseteq \Hs$ with $\dim V_m \le m$, and let $S = \inner{w}{\phi(X)} \in [0, 1]$ a.s.\ with $\normH{w} \le W$. Assume positive group masses and $0 < p_g < 1$, $m$-directional class balance $P_{V_m} \delta_y = 0$ for $y \in \{0, 1\}$ (so $\delta_y \in V_m^\perp$), group-conditional unbiasedness $\E[S \mid G = g] = p_g$, and $\Delta p \neq 0$. Define
\[
  \mu^+_g \;:=\; \E[S \mid Y = 1, G = g] \;=\; \inner{w}{\mu_{1, g}},
  \qquad
  \mu^-_g \;:=\; \E[S \mid Y = 0, G = g] \;=\; \inner{w}{\mu_{0, g}},
\]
so by definition of $\delta_y$,
\begin{equation}
  \mu^+_a - \mu^+_b \;=\; \inner{w}{\delta_1},
  \qquad
  \mu^-_a - \mu^-_b \;=\; \inner{w}{\delta_0}.
  \label{eq:app-bridge-class-gaps}
\end{equation}

\paragraph{Deviation identity.}
The law of iterated expectations applied to $S$ conditional on $G = g$, combined with group-conditional unbiasedness, gives
\begin{equation}
  p_g \;=\; \E[S \mid G = g]
       \;=\; p_g\,\mu^+_g + (1 - p_g)\,\mu^-_g, \qquad g \in \{a, b\},
  \label{eq:app-bridge-tower}
\end{equation}
the same within-group identity used in the proof of \cref{thm:kmr-strong} (cf.\ \cref{eq:app-kmr-two-eq}). Subtract~\eqref{eq:app-bridge-tower} at $g = b$ from~\eqref{eq:app-bridge-tower} at $g = a$, then substitute $\mu^+_b = \mu^+_a - \inner{w}{\delta_1}$ and $\mu^-_b = \mu^-_a - \inner{w}{\delta_0}$ from~\eqref{eq:app-bridge-class-gaps}:
\begin{align*}
  \Delta p
  &\;=\; p_a\,\mu^+_a + (1 - p_a)\,\mu^-_a - p_b\,\mu^+_b - (1 - p_b)\,\mu^-_b \\
  &\;=\; p_a\,\mu^+_a + (1 - p_a)\,\mu^-_a
        - p_b\bigl(\mu^+_a - \inner{w}{\delta_1}\bigr)
        - (1 - p_b)\bigl(\mu^-_a - \inner{w}{\delta_0}\bigr) \\
  &\;=\; (p_a - p_b)\,\mu^+_a + \bigl((1 - p_a) - (1 - p_b)\bigr)\,\mu^-_a
        + p_b\,\inner{w}{\delta_1} + (1 - p_b)\,\inner{w}{\delta_0} \\
  &\;=\; \Delta p\,(\mu^+_a - \mu^-_a) + p_b\,\inner{w}{\delta_1} + (1 - p_b)\,\inner{w}{\delta_0}.
\end{align*}
Dividing by $\Delta p \ne 0$ and rearranging,
\begin{equation}
  (\mu^+_a - \mu^-_a) - 1
  \;=\; -\,\frac{p_b\,\inner{w}{\delta_1} + (1 - p_b)\,\inner{w}{\delta_0}}{\Delta p}.
  \label{eq:app-bridge-dev-a}
\end{equation}
The symmetric substitution $\mu^+_a = \mu^+_b + \inner{w}{\delta_1}$, $\mu^-_a = \mu^-_b + \inner{w}{\delta_0}$ into~\eqref{eq:app-bridge-tower} at $g = a$, with subtraction of~\eqref{eq:app-bridge-tower} at $g = b$, yields the analogous identity with $a$ and $b$ exchanged. Combining the two cases, for each $g \in \{a, b\}$ with $\bar g$ the opposite group,
\begin{equation}
  (\mu^+_g - \mu^-_g) - 1
  \;=\; -\,\frac{p_{\bar g}\,\inner{w}{\delta_1} + (1 - p_{\bar g})\,\inner{w}{\delta_0}}{\Delta p}.
  \label{eq:app-bridge-dev}
\end{equation}
This is the algebraic core of the proof of \cref{thm:kmr-strong}, relaxed to nonzero $\delta_y$: at $\delta_0 = \delta_1 = 0$, the right-hand side vanishes and $\mu^+_g - \mu^-_g = 1$ exactly, recovering~\eqref{eq:app-kmr-dichotomy} in the non-degenerate case.

\paragraph{Scalar control of class gaps.}
For each $g$, let
\[
  \rho_m^{(g)} \;:=\; p_{\bar g}\,\normH{\delta_1} + (1 - p_{\bar g})\,\normH{\delta_0},
  \qquad
  \kappa_g \;:=\; \frac{W\,\rho_m^{(g)}}{|\Delta p|},
\]
so $\kappa_g \le W \rho_m / |\Delta p|$ by the definition of $\rho_m$ in~\eqref{eq:bridge-tail}. Apply Cauchy--Schwarz to the right-hand side of~\eqref{eq:app-bridge-dev}, using $|\inner{w}{\delta_y}| \le \normH{w}\,\normH{\delta_y} \le W\,\normH{\delta_y}$:
\[
  \big|(\mu^+_g - \mu^-_g) - 1\big|
  \;\le\; \frac{\normH{w}\,\big[p_{\bar g}\normH{\delta_1} + (1 - p_{\bar g})\normH{\delta_0}\big]}{|\Delta p|}
  \;\le\; \kappa_g.
\]
Combined with the range constraints $0 \le \mu^-_g \le 1$ and $0 \le \mu^+_g \le 1$, this implies
\begin{equation}
  \E[\,1 - S \mid Y = 1, G = g\,] \;=\; 1 - \mu^+_g \;\le\; \kappa_g,
  \qquad
  \E[\,S \mid Y = 0, G = g\,] \;=\; \mu^-_g \;\le\; \kappa_g.
  \label{eq:app-bridge-cond-mean}
\end{equation}
Indeed: from $\mu^+_g - \mu^-_g \ge 1 - \kappa_g$ and $\mu^-_g \ge 0$ we get $\mu^+_g \ge 1 - \kappa_g$, hence $1 - \mu^+_g \le \kappa_g$; from $\mu^+_g \le 1$ and $\mu^+_g - \mu^-_g \ge 1 - \kappa_g$ we get $\mu^-_g \le \kappa_g$.

\paragraph{Tail bound via Markov.}
For each $g$ and $t \in (0, 1]$, Markov's inequality applied to the non-negative random variables $S$ on $\{Y = 0, G = g\}$ and $1 - S$ on $\{Y = 1, G = g\}$ gives
\[
  \Prb[\,S > t \mid Y = 0, G = g\,] \;\le\; \frac{\E[S \mid Y = 0, G = g]}{t} \;\le\; \frac{\kappa_g}{t},
\]
\[
  \Prb[\,1 - S > t \mid Y = 1, G = g\,] \;\le\; \frac{\E[1 - S \mid Y = 1, G = g]}{t} \;\le\; \frac{\kappa_g}{t}.
\]
This is the same Markov step that, in the proof of \cref{thm:kmr-strong} (\cref{app:kmr-proof}), promotes $\mu^+ = 1$, $\mu^- = 0$ to $S = Y$ almost surely; here it returns finite tail bounds.

\paragraph{Aggregation across $(Y, G)$.}
The events $\{Y = 0,\, S > t\}$ and $\{Y = 1,\, 1 - S > t\}$ are disjoint and together exhaust $\{|S - Y| > t\}$. Conditioning on $G$,
\begin{align*}
  \Prb[|S - Y| > t]
  &\;=\; \sum_{g \in \{a, b\}} \Prb[G = g] \, \Big( p_g\,\Prb[\,1 - S > t \mid Y = 1, G = g\,] \\
  &\hphantom{\;=\; \sum_{g \in \{a, b\}} \Prb[G = g] \, \Big( }
        + (1 - p_g)\,\Prb[\,S > t \mid Y = 0, G = g\,]\Big) \\
  &\;\le\; \sum_{g \in \{a, b\}} \Prb[G = g]\,\frac{\kappa_g}{t}
   \;\le\; \frac{W\,\rho_m}{|\Delta p|\,t}.
\end{align*}
This is~\eqref{eq:bridge-tail}.
\hfill$\square$

\subsection{Proof of the quantitative Pok\'emon--KMR bridge (\cref{thm:bridge-quant})}
\label{app:bridge-quant}

\begin{remark}[Role of directional class balance in \cref{thm:bridge-kmr}]
The bound~\eqref{eq:bridge-tail} depends on the full norms $\normH{\delta_y}$, not on the audit subspace $V_m$. The hypothesis $P_{V_m}\delta_y = 0$ is used here: it ensures $\normH{\delta_y} = \normH{P_{V_m^\perp}\delta_y}$, so the source-condition decay rate can be substituted.
\end{remark}

Hypothesis $P_{V_m}\delta_y = 0$ of \cref{thm:bridge-kmr} places $\delta_y \in V_m^\perp$ for $y \in \{0, 1\}$. Combined with the class-conditional source condition $\delta_y = \Sigma^r u_y$, $\normH{u_y} \le R$, the source-condition saturation step of the proof of \cref{thm:pokemon-quant} (\cref{app:pokemon-quant}, upper bound) applied to $\delta_y$ in place of $\delta$ gives
\[
  \normH{\delta_y} \;=\; \normH{P_{V_m^\perp}\delta_y} \;\le\; R\,\lambda_{m+1}^r,
  \qquad y \in \{0, 1\}.
\]
Hence
\[
  \rho_m^{(g)}
  \;=\; p_{\bar g}\,\normH{\delta_1} + (1 - p_{\bar g})\,\normH{\delta_0}
  \;\le\; R\,\lambda_{m+1}^r
  \quad \text{for both } g,
\]
so $\rho_m \le R\,\lambda_{m+1}^r$. Substituting into~\eqref{eq:bridge-tail} yields the quantitative tail bound asserted by \cref{thm:bridge-quant}; eigendecay (A1) gives
$R\,\lambda_{m+1}^r = \Theta(R\,(m+1)^{-\alpha r})$.

\paragraph{Vanishing-residual limit.}
Along any sequence of audit subspaces satisfying the hypotheses, with $m \to \infty$, (A1) gives $\lambda_{m+1} \to 0$, so $\rho_m \le R\,\lambda_{m+1}^r \to 0$ and the right-hand side of~\eqref{eq:bridge-tail} vanishes for every fixed $t > 0$. Therefore, $\Prb[|S - Y| > t] = 0$ for all $t > 0$ in the limit. Taking $t \downarrow 0$ along a countable sequence $\{t_n\}_{n \ge 1}$ with $t_n \downarrow 0$ and using countable additivity,
\[
  \Prb[\,|S - Y| > 0\,]
  \;=\; \Prb\!\Bigl[\,\bigcup_{n \ge 1} \{|S - Y| > t_n\}\,\Bigr]
  \;\le\; \sum_{n \ge 1} \Prb[\,|S - Y| > t_n\,] \;=\; 0,
\]
so $|S - Y| = 0$ almost surely, i.e.\ $S = Y$ almost surely.
\hfill$\square$

\section{Experimental Details}
\label{app:experiments}

This appendix documents the implementation of the illustrations in \cref{sec:experiments}. All code runs against five fixed random seeds $\{42, 123, 456, 789, 1011\}$; aggregated numbers in the main text are means over seeds and per-seed variability is either plotted as a $\pm\sigma$ band or reported as bootstrap CIs.

\subsection{Datasets}
\label{app:experiments-datasets}

\paragraph{Adult Income (UCI).}
Binary income classification ($Y = \Ind[\text{income}>\$50\text{K}]$), protected attribute $G = $ sex (male $= a$, female $= b$). Fetched from the UCI Machine Learning Repository; the train and test splits distributed by UCI are concatenated, missing values (``?'') dropped, yielding $n \approx 45\text{,}000$ rows after cleaning. Continuous features (\texttt{age}, \texttt{education\_num}, \texttt{capital\_gain}, \texttt{capital\_loss}, \texttt{hours\_per\_week}) are standardized to zero mean and unit variance; categorical features (\texttt{workclass}, \texttt{education}, \texttt{marital\_status}, \texttt{occupation}, \texttt{relationship}, \texttt{race}, \texttt{native\_country}) are one-hot encoded. Empirical base rates: $\hat p_a \approx 0.306$, $\hat p_b \approx 0.109$, $|\widehat{\Delta p}| \approx 0.20$, $\widehat{\Pr}[G=a] \approx 0.675$ (verified within the loader by assertion with tolerance $0.03$).

\paragraph{COMPAS (ProPublica).}
Binary two-year recidivism, protected attribute $G = $ race restricted to African-American ($= a$) vs.\ Caucasian ($= b$), following the public ProPublica analysis protocol; $n \approx 5\text{,}300$ defendants after standard filters (age between $18$ and $96$, retained charge information). Raw CSV fetched from \texttt{propublica/compas-analysis} with SHA-256 verification. Empirical base rates: $\hat p_a \approx 0.52$, $\hat p_b \approx 0.39$, $|\widehat{\Delta p}| \approx 0.13$, $\widehat{\Pr}[G=a] \approx 0.60$.

\paragraph{ACS PUMS (folktables).}
ACSIncome task on California 2018 one-year person-level microdata, fetched via \texttt{folktables} \citep{ding2021retiring} with the ACSIncome feature/label definitions; $n > 10^6$ before subsampling. We subsample to $n = 20\text{,}000$ stratified by the joint $(Y, G)$ distribution for experiment 5 and bandwidth computation, and further to $n = 10\text{,}000$ in \cref{app:experiments-kernels} below for the exact $O(n^2)$ kernel matrix in experiment 4. Empirical base rates on the chosen subsample: $\hat p_a \approx 0.44$, $\hat p_b \approx 0.36$, $|\widehat{\Delta p}| \approx 0.085$, $\widehat{\Pr}[G=a] \approx 0.62$.

\paragraph{Splits.}
All datasets are partitioned into train / validation / test at fractions $0.70 / 0.15 / 0.15$, stratified jointly on $(Y, G)$ with a minimum of $10$ samples per $(y, g) \in \{0,1\}\times\{a,b\}$ subgroup per split (assertion-enforced). The split is deterministic given the seed. Classifier metrics (experiments 3 and 5) are computed on the test split. Experiment~4 (spectral analysis) uses a stratified subsample of the pooled dataset capped at $n = 10\text{,}000$ to form the Gram matrix (\cref{app:experiments-exp4}).

\subsection{Kernels, mean embeddings, and MMD/HSIC estimation}
\label{app:experiments-kernels}

\paragraph{Kernel.}
We use the Gaussian RBF kernel $k(x, x') = \exp(-\|x - x'\|^2 / 2\sigma^2)$ on the standardized feature space. The bandwidth $\sigma$ is set by the median heuristic on the pooled sample, $\sigma^2 = \mathrm{median}\{\|X_i - X_j\|^2 : i \ne j\}$, computed on a uniformly sampled $5\text{,}000$-row subsample to avoid $O(n^2)$ cost. A multi-bandwidth robustness sweep at $\sigma \in \{0.5, 1, 2\} \cdot \sigma_{\mathrm{median}}$ is carried out in experiment 4 and the qualitative conclusions of \cref{fig:experiments} top row are unchanged across these choices (slopes shift; the $k_{99}$ thresholds move by at most a factor of two on Adult/ACS and are unchanged on COMPAS).

\paragraph{Kernel-matrix subsampling.}
For the spectral analysis (experiment 4) we cap the Gram matrix at $n = 10\text{,}000$ rows per seed via stratified subsampling on $(Y, G)$; COMPAS has $n < 10\text{,}000$ and is used in full. The pooled covariance eigendecomposition retains the top $200$ eigenvalues via \texttt{scipy.linalg.eigh} with \texttt{subset\_by\_index}; this is more than sufficient to resolve the $k_{99}$ cutoffs reported on every dataset.

\paragraph{MMD and HSIC.}
Squared MMD is estimated by the biased V-statistic $\widehat{\mathrm{MMD}}^2(P, Q) = \frac{1}{n_P^2}\sum_{i,j} k(x_i, x_j) - \frac{2}{n_P n_Q}\sum_{i, j} k(x_i, y_j) + \frac{1}{n_Q^2}\sum_{i, j} k(y_i, y_j)$ on the subsampled Gram block. Conditional and class-conditional embeddings $\hat\mu_g, \hat\mu_{y, g}, \hat\delta, \hat\delta_y$ are the natural subsample averages of $\phi(X_i)$ on the corresponding subsets. HSIC is estimated by the biased V-statistic of \citet{gretton2005measuring}.

\subsection{Statistical protocols}
\label{app:experiments-stats}

\paragraph{Bootstrap CIs.}
All confidence intervals in \cref{fig:experiments} and the main-text tables are percentile bootstrap CIs with $B = 1000$ resamples on the test split. For MMD we resample within-group with replacement; for classifier error we resample the test set with replacement; for the representation-level $\|\delta_y^\Phi\|$ we resample $(X, Y, G)$ jointly within each $(y, g)$ stratum.

\paragraph{Permutation tests.}
MMD and HSIC tests use $999$ permutations of the group label under the null $P_a = P_b$ (respectively $\Phi(X) \indep G$), with one-sided $p$-values. When multiple criteria are tested simultaneously we apply Benjamini--Hochberg FDR correction \citep{benjaminihochberg1995fdr} at $q = 0.05$ across the family.

\subsection{Experiment 4: spectral decay (\cref{fig:experiments}, top row)}
\label{app:experiments-exp4}

For each dataset and seed we compute $\hat\mu_a, \hat\mu_b$, the pooled mean $\hat\mu = \tfrac{1}{2}(\hat\mu_a + \hat\mu_b)$, and the pooled covariance $\hat\Sigma$; eigendecompose $\hat\Sigma$ to obtain $\hat\lambda_1 \ge \cdots \ge \hat\lambda_{200}$ and eigenfunctions $\hat e_j$; form $\hat\delta = \hat\mu_a - \hat\mu_b$ and its cumulative capture under the eigenvalue-ordered allocation. The plotted curve is the cumulative capture $c_m := \sum_{j \le m} \inner{\hat\delta}{\hat e_j}^2 / \|\hat\delta\|^2$, where $\hat e_1, \hat e_2, \ldots$ are ordered by decreasing eigenvalue of $\hat\Sigma$; the residual is $1 - c_m$. This is the minimax-optimal allocation identified by \cref{thm:pokemon-quant}. The $\pm\sigma$ band is over seeds. The dotted vertical line marks $k_{99} := \min\{m : c_m \ge 0.99\}$, median over seeds: Adult $35$, COMPAS $4$, ACS PUMS $8$.

\subsection{Experiment 3: Pareto frontier (\cref{fig:experiments}, middle row)}
\label{app:experiments-exp3}

\paragraph{Methods swept.}
We evaluate: (i) unconstrained logistic regression and gradient-boosted trees via \texttt{sklearn.linear\_model.LogisticRegression} \citep{pedregosa2011sklearn} and \texttt{xgboost.XGBClassifier} \citep{chen2016xgboost} as baselines; (ii) Platt scaling \citep{platt1999probabilistic} on top of the unconstrained LR, for calibration; (iii) Reweighting pre-processing \citep{kamirancalders2012preprocessing}; (iv) Hardt post-processing \citep{hardt2016equality} via \texttt{fairlearn.postprocessing.ThresholdOptimizer} \citep{weerts2023fairlearn} under the \texttt{equalized\_odds} constraint, applied to the LR base classifier; and (v) the exponentiated-gradient reduction of \citet{agarwal2018reductions} via \texttt{fairlearn.reductions.ExponentiatedGradient} under both \texttt{DemographicParity} \citep{dwork2012fairness} and \texttt{EqualizedOdds} \citep{hardt2016equality} constraints, swept over $\epsilon \in \{0.005, 0.01, 0.02, 0.03, 0.05, 0.08, 0.10, 0.15, 0.20, 0.30, 0.50, 1.0\}$. The ExpGrad sweep generates $\approx 180$ classifiers per constraint type across datasets and seeds. Byte-identical outputs at different $\epsilon$ (common when the constraint is non-binding) are deduplicated before violation testing to avoid double-counting.

\paragraph{Near-separation gate and the separation-enforcing subset.}
\Cref{cor:pareto-bound}'s hypothesis is \emph{exact} separation of $\hat Y$. We test an approximate form: classifiers are deemed near-separation if $\mathrm{EO\_gap} := \max(|\TPR_a - \TPR_b|, |\FPR_a - \FPR_b|) \le \tau$, with $\tau = 0.05$ (primary) and $\tau = 0.02$ (strict). To test the bound at its natural hypotheses, the violation count restricts further to \emph{separation-enforcing methods} (Hardt post-processing and ExpGrad-EO), since methods targeting DP or no constraint can happen to land in the near-separation band but do not enforce it as an optimization target. Near-separation counts by method and dataset, $\tau = 0.05$: Adult $= \{\text{Hardt}: 2,\; \text{ExpGrad-EO}: 10\}$, COMPAS $= \{\text{ExpGrad-EO}: 11\}$, ACS PUMS $= \{\text{Hardt}: 4,\; \text{ExpGrad-EO}: 19\}$; total 46. Strict ($\tau = 0.02$): Adult $2$, COMPAS $3$, ACS PUMS $12$; total $17$.

\paragraph{Theoretical bound and violation criterion.}
The bound curve $y = \min(\hat p, 1-\hat p)\cdot (1 - x/|\widehat{\Delta p}|)$ is drawn using the \emph{dataset-averaged} $\hat p$ (overall) and $|\widehat{\Delta p}|$ as reported in \cref{app:experiments-datasets}. A raw violation is $\mathrm{error} < \mathrm{bound}$; a CI violation additionally requires the upper endpoint of the test-error bootstrap CI to lie below the bound. Results: raw violations $\{0, 5, 2\}$ on Adult / COMPAS / ACS PUMS at $\tau = 0.05$, of which $\{0, 0, 1\}$ are CI violations (the one being an ACS PUMS Hardt configuration with CI-margin $0.0035$, at the test-set sampling-noise floor); the COMPAS raw-but-not-CI cases have bootstrap CIs that cross the bound. At $\tau = 0.02$: $\{0, 2, 1\}$ raw, $\{0, 0, 0\}$ CI.

\subsection{Experiment 5: fair representations (\cref{fig:experiments}, bottom row)}
\label{app:experiments-exp5}

\paragraph{Methods.}
Three representation learners are evaluated. \textbf{LFR} \citep{zemel2013learning} is run from the \texttt{aif360.algorithms.preprocessing.LFR} implementation \citep{bellamy2019aif360} with $K = 10$ prototypes and the fairness-prior hyperparameter swept over $A_z \in \{0.1, 1, 10, 50, 100\}$; the transformed representation is the LFR prototype-based \emph{reconstruction} of the input (this is what \texttt{LFR.transform()} returns in aif360, a design choice we document but do not modify). \textbf{Fair-VAE}, in the spirit of \citet{louizos2016variational}, is a custom PyTorch \citep{paszke2019pytorch} implementation with a $32$-dimensional latent, encoder MLP of hidden widths $128$ and $64$, trained $100$ epochs with Adam at learning rate $10^{-3}$; the fairness term is $\lambda_{\mathrm{MMD}}\,\widehat{\mathrm{MMD}}^2(Z \mid G{=}a, Z \mid G{=}b)$ with RBF kernel at the batch median bandwidth, swept over $\lambda_{\mathrm{MMD}} \in \{0, 0.1, 0.5, 1, 5, 10\}$ with a $10$-epoch warmup at $\lambda = 0$ for numerical stability. \textbf{Adversarial debiasing} \citep{edwards2016censoring} is a custom PyTorch implementation with a shared MLP encoder feeding both a classifier and a group-adversary head; trained with Adam at $10^{-3}$ (adversary at $5 \times 10^{-4}$), swept over $\lambda_{\mathrm{adv}} \in \{0, 0.5, 1, 2, 5, 10\}$.

\paragraph{Representation-level diagnostics.}
For each trained encoder we compute on the test split, using a Gaussian RBF kernel on $\Zs$ with median-heuristic bandwidth: the RKHS parity gap $\|\hat\mu_{\Phi, a} - \hat\mu_{\Phi, b}\|$ and the group-specific class signal $\max_g\|\hat\mu_{\Phi, 1,g} - \hat\mu_{\Phi, 0,g}\|$, together with the per-class separation gaps $\|\hat\delta_y^\Phi\| = \|\hat\mu_{\Phi, y, a} - \hat\mu_{\Phi, y, b}\|$ for $y \in \{0, 1\}$. The plotted $x, y$ coordinates are these norms; points are colored by method.

\paragraph{Gates.}
Two quality gates are imposed on plotted encoders: (i) a \emph{mode-collapse} filter that discards encoders whose representation has a per-feature standard deviation below $10^{-6}$ or a pairwise distance median below $10^{-6}$ (these arise at high $\lambda_{\mathrm{MMD}}$ for Fair-VAE on small datasets and make the class-signal diagnostic uninformative); and (ii) a \emph{near-separation} gate $\max_y \|\hat\delta_y^\Phi\| \le \rho$ with $\rho = 0.15$ fixed across datasets, chosen so that the theorem's bound line $y = (x + \rho)/|\Delta p|$ is informative on every panel and so that each dataset retains a comparable number of encoders (Adult 53, COMPAS 28, ACS PUMS 63 after both gates). The legend count in each panel is the number of encoders per method passing both gates.

\paragraph{Sensitivity to $\rho$.}
The qualitative conclusion, that no encoder populates the upper-left forbidden region, is robust across $\rho \in [0.05, 0.25]$; the bound line shifts vertically with $\rho$ but the relative position of the empirical scatter is unchanged. At very tight $\rho \le 0.05$ the panels become sparse and robustness is not testable on COMPAS (zero encoders pass); at loose $\rho \ge 0.30$ the bound line exits the plot entirely.

\subsection{Code and reproducibility}
\label{app:experiments-code}

The full pipeline is implemented in Python~3.12 under \texttt{uv}-managed dependencies; the lockfile is pinned. Kernel computations use \texttt{scipy} on CPU and \texttt{torch} on accelerator backends (MPS on M1, CUDA on the training server). Total compute is $\approx 6$ CPU-hours on a standard laptop (M1 MacBook Pro) for all three illustrations in \cref{fig:experiments} plus the robustness tables. Code, preprocessed datasets (where license permits redistribution), and the figure-generation scripts will be released under MIT license on publication.

\end{document}